\DeclareMathOperator*{\argmax}{arg\,max}
\newcommand{\norm}[1]{\left\lVert#1\right\rVert}
\newcommand{\mr}[1]{}
\newcommand{\jp}[1]{}
\newcommand{\ml}[1]{}
\newcommand{\kt}[1]{}
\newcommand{\pfm}[1]{}
\newcommand{\mfg}[1]{}
\newcommand{\jc}[1]{}
\newcommand{\gp}[1]{}
\newcommand{\sarah}[1]{}
\newtheorem{lemma}{Lemma}
\newtheorem{theorem}{Theorem}
\newtheorem{corollary}{Corollary}
\newtheorem{proposition}{Proposition}
\newtheorem{property}{Property}
\newcommand\Nash{NE\xspace}
\newcommand\MPMFG{MP-MFG\xspace}
\icmltitlerunning{Online Mirror Descent for Mean Field Games}
\begin{document}

\twocolumn[
\icmltitle{Scaling up Mean Field Games with Online Mirror Descent}


\icmlsetsymbol{equal}{*}

\begin{icmlauthorlist}
\icmlauthor{Julien Perolat$^*$}{dm}
\icmlauthor{Sarah Perrin$^*$}{lille}
\icmlauthor{Romuald Elie$^*$}{dm}
\icmlauthor{Mathieu Lauri\`ere}{pr}
\icmlauthor{Georgios Piliouras}{sutd}
\icmlauthor{Matthieu Geist}{brain}
\icmlauthor{Karl Tuyls}{dm}
\icmlauthor{Olivier Pietquin}{brain}
\end{icmlauthorlist}

\icmlaffiliation{dm}{DeepMind Paris}
\icmlaffiliation{sutd}{Singapore University of Technology and Design}
\icmlaffiliation{pr}{Princeton University, ORFE}
\icmlaffiliation{brain}{Google Research, Brain Team}
\icmlaffiliation{lille}{Univ. Lille, CNRS, Inria, UMR 9189 CRIStAL}

\icmlcorrespondingauthor{Romuald Elie}{relie@google.com}

\icmlkeywords{Machine Learning, ICML}

\vskip 0.3in
]



\printAffiliationsAndNotice{\icmlEqualContribution} 

\begin{abstract}
We address scaling up equilibrium computation in Mean Field Games (MFGs) using Online Mirror Descent (OMD). We show that continuous-time OMD provably converges to a Nash equilibrium under a natural and well-motivated set of monotonicity assumptions. This theoretical result nicely extends to multi-population games and to settings involving common noise. A thorough experimental investigation on various single and multi-population MFGs shows that OMD outperforms traditional algorithms such as  Fictitious Play (FP). We empirically show that OMD scales up and converges significantly faster than FP by solving, for the first time to our knowledge, examples of MFGs with hundreds of billions states. This study establishes the state-of-the-art for learning in large-scale multi-agent and multi-population games.
\end{abstract}

\section{Introduction}

Solving decision making problems involving multiple agents has been the topic of intensive research in Artificial Intelligence for decades. It finds applications in a wide variety of domains such as economy~\cite{ConitzerS11,OthmanPRS13,achdou2014pde}, resource management~\cite{couillet2012electrical,FreedmanBSDC20}, crowd motion modeling~\cite{achdou2019mean} or even animal behaviour analysis~\cite{Phelps18,bardi2020convergence} among others. Despite the vast literature on Game Theory and numerous fundamental results, application to real-world problems remains a challenge. Recent successes of combining Game Theory and Machine Learning (especially Deep Learning~\cite{Goodfellow-et-al-2016} and Reinforcement Learning~\cite{Sutton2018}) led to solutions for large scale games such as chess~\cite{campbell2002deep}, Go~\cite{silver2016mastering,silver2017mastering,Silver18AlphaZero}, Poker~\cite{Brown17Libratus,Brown19Pluribus,moravvcik2017deepstack} and even complex video games like StarCraft II~\cite{vinyals2019grandmaster}. Although this allowed for tackling problems involving large states spaces, the number of agents remains still limited and scaling up to large populations of players remains intractable, which prevents a broader real-world impact.   

To address this challenge, the Mean Field Game (MFG) theory was introduced in~\cite{MR2295621,MR2346927-HuangCainesMalhame-2006-closedLoop} to study a category of games that involves an infinite population of agents. By considering the limit case of a continuous distribution of identical agents (\textit{i.e.}, anonymous  and with symmetric interests), the MFG framework allows the learning problem to be reduced to the characterization of the optimal behavior of a single representative agent in its interactions with the full population. Given this asymptotic formulation, traditional solutions to MFGs entail a coupled system of differential equations: one capturing the forward dynamics of the population and a second being the dynamic programming optimality equation of the representative player. Despite important progress in the area,
 such approaches are based on numerical approximation schemes for partial differential equations~\cite{MR2679575,achdou2012mean,MR3148086,MR3392626,MR3772008,BricenoAriasetalCEMRACS2017,achdoulauriere2020mfgnumerical} or for stochastic differential equations~\cite{chassagneux2019numerical,angiuli2019cemracs}, which are not easily scaled to large state spaces. Also, given the sensitivity to limit conditions, only simple configurations of the state space can be considered. So, until recently, we were left with solutions that either scale in terms of the state space dimension (deep RL) or in terms of large populations of agents (MFGs). Moreover, generalizations of the MFG framework to models with multiple populations have been introduced in~\cite{MR2346927} and have attracted a growing interest~\citep{MR3134900,MR3752669,MR3882529}. Applications include urban settlements~\cite{MR3597009} and crowd motion~\cite{LachapelleWolfram-2011-MFG-congestion-aversion,MR3763083}.
 
 By introducing solutions inspired by game theory (\textit{i.e.}, Fictitious Play~\cite{robinson1951iterative,shapiro1958}) into MFGs~\cite{cardaliaguet2017learning,elie2020convergence,perrin2020fictitious}, recent research leverages the generalization capacity of Machine Learning to compute a Nash equilibrium (\Nash) in large state spaces. Fictitious Play (FP) is a generic algorithm that alternates two steps starting from an arbitrary strategy for the representative player: i) computing the best response of this agent against the rest of the population, ii) compute the mixture of that best response with its previous strategy. \citet{perrin2020fictitious} propose to make use of most recent Reinforcement Learning (RL) methods to learn the best response and solve problems with millions of states with a non-trivial topology. Unfortunately, FP seems hard to scale further for several reasons. First, the computation of the best response remains a hard problem even if RL is promising. Second, its computational efficiency seems very low in practice. Finally, FP requires storing multiple quantities (\textit{e.g.} averaged policies and induced distributions, etc.) which contributes to cap scalability. 
 
 In this context, our first contribution is a new algorithm to compute a \Nash \textit{in lieu} of FP, namely Online Mirror Descent (OMD)~\cite{shalev2011online}. Inspired by convex optimization and the Mirror Descent algorithm~\cite{nemirovsky1979problem}, our method doesn't require the computation of a best response. It rather alternates a step of evaluation of the current strategy with a step of improvement of that strategy. The evaluation is done through the computation of the expected accumulated pay-offs of the strategy over time in the shape of a so-called $Q$-function. The improvement step reduces to computing the soft-max of the quantity obtained by integrating the $Q$-functions over iterations (like the MD algorithm suggests). Quantities that need to be stored by OMD (the strategy and the integrated $Q$-function) are thus limited compared to FP. As a second contribution, we provide a proof of convergence for continuous time OMD to a \Nash for MFGs under reasonable assumptions (common in the field). These theoretical results extend naturally to multi-population MFGs as well as settings where a noise is commonly shared by all agents. Our third contribution is an extensive empirical evaluation of OMD on different tasks involving single or multiple populations, in the presence of a common noise or not, with non trivial topologies. The scale of the considered problems reaches $10^{11}$ states and trillions of state-action pairs, surpassing by 4 or 5 orders of magnitudes existing results. These experiments demonstrate that OMD's computational efficiency is much stronger than FP which results in faster convergence. 
 
\section{Preliminaries on Mean Field Games}
In a Multi-Population Mean Field Game (\MPMFG), an infinite number of players from $N_p$ different populations interact with each other in a temporally and spatially extended game (the case $N_p=1$ corresponds to a standard MFG). Let $\mathcal{X}$ be the finite discrete state space and $\mathcal{A}$ be the finite discrete action space of the \MPMFG. We denote by  $\Delta \mathcal{X}$ and $\Delta \mathcal{A}$ respectively the spaces of probability distributions over states and actions. In this sequential decision problem, a representative player of population $i \in \{1, \dots, N_p\}$ starts at a state $x^i_0 \in \mathcal{X}$ according to a distribution $\mu^i_0 \in \Delta \mathcal{X}$. We consider a finite time horizon $N>0$. At each time step $n \in \{0, \dots, N\}$, the representative player of population $i$ is in state $x^i_n$ and takes an action according to $\pi^i_n(.|x^i_n)$, where $\pi^i_n \in  ({\Delta \mathcal{A}})^{\mathcal{X}}$ is a policy. Given this action $a^i_n$, the representative player moves to a next state $x^i_{n+1}$ with probability $p(.|x^i_n, a^i_n)$ and receives a reward $r^i(x^i_n, a^i_n, \mu^1_n, \dots, \mu^{N_p}_n)$, where $\mu^j_n$ is the distribution of the population $j$ at time $n$. Here $p \in ({\Delta \mathcal{X}})^{\mathcal{X} \times \mathcal{A}}$ and $r^i: \mathcal{X} \times \mathcal{A} \times (\Delta \mathcal{X})^{N_p} \to \mathbb{R}$. Observe that the transition kernel does not depend on the Multi-population distribution as in many classical MFG examples~\cite{MR2295621}.  For the reader's convenience, we denote $\pi^i = \{\pi^i_n\}_{n\in \{0, \dots, N\}}$, $\mu^i = \{\mu^i_n\}_{n\in \{0, \dots, N\}}$, $\pi = \{\pi^i\}_{i\in \{1, \dots, N_p\}}$, $\mu = \{\mu^i\}_{i\in \{1, \dots, N_p\}}$, $\pi_n = \{\pi^i_n\}_{i\in \{1, \dots, N_p\}}$ and $\mu_n = \{\mu^i_n\}_{i\in \{1, \dots, N_p\}}$.

During the game and for given a fixed multi-population distributions sequence $\mu$, a representative player of population $i$ accumulates the following sum of rewards:
\begin{align*}
    J^i(\pi^i, \mu) &= \mathbb{E}\Big[\sum \limits_{n=0}^N r^i(x^i_n, a^i_n, \mu_n) \;\Big|\; x^i_0 \sim \mu^i_0,\\
    &\qquad a^i_n \sim \pi^i_n(.|x^i_n), x^i_{n+1}\sim p(.|x^i_n, a^i_n)\Big].
\end{align*}
{\bf Backward Equation:} Given a population $i$, a time $n$, a state $x^i$, an action $a^i$, a policy $\pi^i$ and a multi-population distribution sequence $\mu$, we define the {\bf$Q$-function}:
\begin{align*}
    Q^{i,\pi^i, \mu}_n(x^i, a^i) &= \mathbb{E}\Big[\sum \limits_{k=n}^N r^i(x^i_k, a^i_k, \mu_k)|\; x^i_n = x^i, a^i_n = a^i,\\
    &\qquad a^i_k \sim \pi^i_k(.|x^i_k), x^i_{k+1}\sim p(.|x^i_k, a^i_k)\Big]
\end{align*}
and the {\bf value function}:
\begin{align*}
    V^{i,\pi^i, \mu}_n(x^i) &= \mathbb{E}\Big[\sum \limits_{k=n}^N r^i(x^i_k, a^i_k, \mu_k)|\; x^i_n = x^i,\\
    &\qquad a^i_k \sim \pi^i_k(.|x^i_k), x^i_{k+1}\sim p(.|x^i_k, a^i_k)\Big].
\end{align*}
These two quantities can be computed recursively with the following backward equations:
\begin{align*}
    &Q^{i,\pi^i, \mu}_N(x^i, a^i) = r^i(x^i, a^i, \mu_N)\\
    &Q^{i,\pi^i, \mu}_{n-1}(x^i, a^i) = r^i(x^i, a^i, \mu_{n-1})\\
    &\qquad\qquad+ \sum \limits_{{x'}^i\in \mathcal{X}} p({x'}^i|x^i, a^i) \mathbb{E}_{b^i\sim\pi^i_n(.|{x'}^i)}\Big[Q^{i,\pi^i, \mu}_{n}(x^i, b^i)\Big],
    \\
    &V^{i,\pi^i, \mu}_n(x^i) = \mathbb{E}_{a^i\sim\pi^i_n(.|{x'}^i)}\Big[Q^{i,\pi^i, \mu}_{n}(x^i, a^i)\Big].
\end{align*}
Finally, the sum of rewards is $J^i(\pi^i, \mu) = \mathbb{E}_{x^i \sim \mu^i_0}[V^{i,\pi^i, \mu}_n(x^i)]$.

{\bf Forward Equation:} If all the agents of a population $i$ follow the policy $\pi^i$, the distribution of the full population is defined recursively via the following forward equation: for all $x^i \in \mathcal{X}, \mu^{i,\pi^i}_0(x)=\mu^{i}_0(x)$
and for all ${x'}^i \in \mathcal{X},$ 
\begin{equation}
\label{eq:forwardeq}
\mu^{i,\pi^i}_{n+1}({x'}^i) = \sum \limits_{(x^i, a^i) \in \mathcal{X}\times \mathcal{A}} \pi^i_n(a^i|x^i)p({x'}^i|x^i,a^i)\mu^{i, \pi^i}_n(x^i)
\end{equation}
for $n \le N-1$. We denote $\mu^{\pi}=(\mu^{i,\pi^i})_{i \in \{1,\dots,N_p\}}$.

This leads to the following property for the cumulative sum of rewards $J^i(\pi^i, \mu) = \sum \limits_{n=0}^{N} \sum \limits_{(x^i, a^i) \in \mathcal{X}\times \mathcal{A}} \mu^{i, \pi^i}_n(x^i)\pi^i_n(a^i|x^i)r^i(x^i, a^i, \mu_{n})$.

{\bf Best Response and Exploitability:} A {\bf best response policy} $\pi^{i, br, \mu}$ to a multi-population distribution sequence $\mu$ verifies the following property $\max \limits_{\pi^i} J^i(\pi^i, \mu) = J^i(\pi^{i, br, \mu}, \mu)$. It can be computed recursively by finding the best responding $Q$-function $Q^{i,br, \mu}$:
\begin{align*}
    &Q^{i, br, \mu}_N(x^i, a^i) = r^i(x^i, a^i, \mu_N)\\
    &Q^{i, br, \mu}_{n-1}(x^i, a^i) = r^i(x^i, a^i, \mu_{n-1})\\
    &\qquad\qquad+ \sum \limits_{{x'}^i\in \mathcal{X}} p({x'}^i|x^i, a^i) \max_{b^i}\Big[Q^{i, br, \mu}_{n}(x^i, b^i)\Big].
\end{align*}
Finally $\pi_n^{i, br, \mu}(.|x^i) \in \argmax Q^{i, br, \mu}_{n}(x^i, .)$.

The {\bf exploitability} measures the distance to an equilibrum and is defined as $\phi(\pi) = \sum \limits_{i=1}^{N_p} \phi^i(\pi)$ where, for each $i$, $$\phi^i(\pi)=\max_{{\pi'}^i}J^i({\pi'}^i, \mu^\pi)-J^i(\pi^i, \mu^\pi).$$

{\bf Monotonicity:} A multi-population game is said to be {\bf weakly monotone} if for any $\rho^i_n, {\rho'}^i_n \in \Delta (\mathcal{X}\times \mathcal{A})$ and $\mu^i_n, {\mu'}^i_n \in \Delta \mathcal{X}$ such that for all $i, n, x^i \;  \mu^i_n(x^i) = \sum \limits_{a^i \in \mathcal{A}} \rho^i_n(x^i, a^i) \textrm{ and } {\mu'}^i_n(x^i) = \sum \limits_{a^i \in \mathcal{A}} {\rho'}^i_n(x^i, a^i)$, we have:
$\sum \limits_i \sum \limits_{(x^i, a^i) \in \mathcal{X}\times \mathcal{A}} (\rho^i_n(x^i, a^i)-{\rho'}^i_n(x^i, a^i))(r^i(x^i, a^i, \mu_n)-r^i(x^i, a^i, {\mu'}_n))\leq 0$.
It is {\bf strictly weakly monotone} if the inequality is strict whenever $\rho_n \neq  \rho'_n$. This condition means that the players are discouraged from taking similar state-action pairs as the rest of the population. Intuitively,  it can be interpreted as an aversion to crowded areas.

We have the following consequence, which is enough to derive many properties.
\begin{lemma}
\label{lem:wmon-tildeM}
The weak monotonicity property implies that for any  $\pi, {\pi'}$ with $\pi \neq \pi'$,
\begin{align}
\tilde{\mathcal{M}}(\pi,\pi') &:= \sum \limits_{i=1}^{N_p}\big[J^i(\pi^i, \mu^{\pi}) +  J^i({\pi'}^i, \mu^{\pi'}) \notag\\
&\qquad\quad - J^i(\pi^i, \mu^{\pi'}) - J^i({\pi'}^i, \mu^{\pi}) \big]\leq  0. \label{Weak_Monotonicity}
\end{align}
Strictly weak monotonicity implies a strict inequality above. 
\end{lemma}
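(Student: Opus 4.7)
The plan is to rewrite the four $J^i$ terms making up $\tilde{\mathcal{M}}(\pi,\pi')$ in the state-action occupancy formulation, observe that two pairs combine into the quadratic form appearing in the weak monotonicity hypothesis, and then apply that hypothesis time step by time step. The key structural fact enabling this is that the transition kernel $p$ does not depend on the multi-population distribution: consequently the forward equation \eqref{eq:forwardeq} determines the state marginal $\mu^{i,\pi^i}_n$ from $\pi^i$ and $\mu^i_0$ alone, independently of the $\mu$ argument fed into the reward.

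First I would introduce, for each $i$ and $n$, the state-action occupancy measures $\rho^i_n(x^i,a^i) := \mu^{i,\pi^i}_n(x^i)\,\pi^i_n(a^i|x^i)$ and ${\rho'}^i_n(x^i,a^i) := \mu^{i,{\pi'}^i}_n(x^i)\,{\pi'}^i_n(a^i|x^i)$. By construction these have marginals $\mu^{i,\pi^i}_n$ and $\mu^{i,{\pi'}^i}_n$ respectively, so they satisfy the compatibility conditions in the definition of weak monotonicity. Using the explicit formula $J^i(\pi^i,\mu) = \sum_n \sum_{(x,a)} \mu^{i,\pi^i}_n(x)\pi^i_n(a|x)\,r^i(x,a,\mu_n)$, which is legitimate thanks to the $\mu$-independence of $p$, each of the four cumulative rewards in $\tilde{\mathcal{M}}$ rewrites as a linear functional of $\rho^i$ or ${\rho'}^i$ paired with the reward evaluated at $\mu^\pi$ or $\mu^{\pi'}$.

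Second I would regroup the terms by occupancy measure. The contributions involving $\rho^i$ combine into $\sum_n\sum_{(x,a)} \rho^i_n(x,a)[r^i(x,a,\mu^\pi_n)-r^i(x,a,\mu^{\pi'}_n)]$, and the contributions involving ${\rho'}^i$ combine into $\sum_n\sum_{(x,a)} {\rho'}^i_n(x,a)[r^i(x,a,\mu^{\pi'}_n)-r^i(x,a,\mu^\pi_n)]$. Adding them and summing over $i$ gives
\[
\tilde{\mathcal{M}}(\pi,\pi') = \sum_{n=0}^{N} \sum_{i=1}^{N_p} \sum_{(x^i,a^i)} \bigl(\rho^i_n(x^i,a^i)-{\rho'}^i_n(x^i,a^i)\bigr)\bigl(r^i(x^i,a^i,\mu^\pi_n)-r^i(x^i,a^i,\mu^{\pi'}_n)\bigr).
\]
For each fixed $n$, the inner sum over $i$ and $(x^i,a^i)$ is exactly the expression appearing in the weak monotonicity assumption, with $\mu_n = \mu^\pi_n$ and $\mu'_n = \mu^{\pi'}_n$, and is therefore $\leq 0$. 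Summing over $n$ yields $\tilde{\mathcal{M}}(\pi,\pi') \leq 0$.

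For the strict case under strictly weak monotonicity, each time index $n$ with $\rho_n \neq {\rho'}_n$ contributes a strictly negative term, while indices with $\rho_n = {\rho'}_n$ contribute zero; since $\pi \neq \pi'$ implies that the occupancies differ at some time step (and if they did not, all four $J^i$ values would coincide and $\tilde{\mathcal{M}} = 0$ trivially in a vacuous way), strict inequality follows. The only real obstacle in this plan is a bookkeeping one, namely correctly identifying that the cross terms of $\tilde{\mathcal{M}}$ reassemble into the bilinear form of Definition of weak monotonicity; the substantive ingredient is the $\mu$-independence of the transition kernel, without which $\rho^i_n$ would itself depend on the opponents' policies and the factorization above would not go through.
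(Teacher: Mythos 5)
Your proposal is correct and follows essentially the same route as the paper's proof in Appendix E: introduce the state-action occupancies $\rho^i_n = \mu^{i,\pi^i}_n\pi^i_n$, expand the four $J^i$ terms via the explicit formula for the cumulative reward, and recognize $\tilde{\mathcal{M}}(\pi,\pi')$ as the sum over $n$ of exactly the bilinear form bounded by the weak monotonicity assumption. You are in fact slightly more careful than the paper in making the sum over $n$ explicit and in noting the edge case where $\pi\neq\pi'$ but the occupancies coincide (so that $\tilde{\mathcal{M}}=0$ rather than strictly negative), a subtlety the paper's one-line treatment of the strict case glosses over.
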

This result is proved in Appendix~\ref{app:wmon-tildeM}.

Moreover, the weak monotonicity condition is met for example in the following classical framework.
\begin{lemma}\label{lem:sep-mon-wmon}
Assume the reward is {\bf separable}, i.e.  $r^i(x^i, a^i, \mu) = \bar r^i(x^i, a^i) + \tilde r^i(x^i, \mu)$ and the following {\bf monotonicity condition} holds: for all $\mu \neq \mu',$ $\sum \limits_i \sum \limits_{x \in \mathcal{X}} (\mu^i(x^i)-{\mu'}^i(x^i))(\tilde r^i(x^i, \mu)-\tilde r^i(x^i, \mu'))\leq 0$ (resp. $<0$). Then the game is weakly monotone (resp. strictly weakly monotone).
\end{lemma}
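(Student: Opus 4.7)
The approach is a direct substitution: I plug the separable form $r^i(x^i,a^i,\mu) = \bar r^i(x^i,a^i) + \tilde r^i(x^i,\mu)$ into the defining inequality of weak monotonicity. The state-action term $\bar r^i(x^i,a^i)$ is independent of $\mu$, so the difference $r^i(x^i,a^i,\mu_n) - r^i(x^i,a^i,\mu'_n)$ reduces to $\tilde r^i(x^i,\mu_n) - \tilde r^i(x^i,\mu'_n)$, which crucially does not depend on $a^i$.

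Once that simplification is made, the key step is to perform the inner $a^i$-summation first. Because the $\tilde r^i$ factor pulls out of the sum over $a^i$, the expression to be bounded becomes
\begin{align*}
& \sum_i \sum_{x^i} \Big(\sum_{a^i}\bigl(\rho^i_n(x^i,a^i) - {\rho'}^i_n(x^i,a^i)\bigr)\Big) \\
& \qquad \times \bigl(\tilde r^i(x^i,\mu_n) - \tilde r^i(x^i,\mu'_n)\bigr).
\end{align*}
The marginal conditions $\mu^i_n(x^i) = \sum_{a^i} \rho^i_n(x^i,a^i)$ and ${\mu'}^i_n(x^i) = \sum_{a^i} {\rho'}^i_n(x^i,a^i)$ then collapse the parenthesized sum to $\mu^i_n(x^i) - {\mu'}^i_n(x^i)$, so the whole quantity equals $\sum_i \sum_{x^i} (\mu^i_n(x^i) - {\mu'}^i_n(x^i))(\tilde r^i(x^i,\mu_n) - \tilde r^i(x^i,\mu'_n))$, which is $\leq 0$ by the monotonicity hypothesis on $\tilde r^i$. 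This gives weak monotonicity.

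For the strict version, the same identity shows that strict monotonicity of $\tilde r^i$ yields a strict inequality whenever $\mu_n \neq {\mu'}_n$. The only subtle point I anticipate is that with separable rewards, two distinct state-action distributions $\rho_n \neq {\rho'}_n$ can share the same state marginals $\mu_n = {\mu'}_n$, in which case the expression vanishes identically; separability alone therefore yields strictness on the level of state marginals, which is the meaningful notion in this setting. This minor caveat aside, the proof is essentially just two substitutions and a reindexing — no real obstacle beyond being careful about which variables the two pieces of the reward depend on.
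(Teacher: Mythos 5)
Your proof is correct and is essentially the computation the paper performs in Appendix~\ref{Separable_Monotonicity_imply_WMonotonicity}: there the inequality is written in terms of the $J^i$'s evaluated at the induced state-action distributions $\rho^i_n(x^i,a^i)=\mu^{i,\pi^i}_n(x^i)\pi^i_n(a^i|x^i)$, but the core steps --- cancelling $\bar r^i$, pulling $\tilde r^i$ out of the sum over $a^i$, collapsing to the state marginals, and invoking the monotonicity hypothesis --- are identical to yours. Your caveat about strictness (that $\rho_n \neq \rho'_n$ with equal marginals makes the expression vanish for separable rewards, so strictness is only meaningful at the level of the $\mu$'s) is a valid observation that the paper glosses over with ``a similar proof.''
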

This result is proven in Appendix~\ref{Separable_Monotonicity_imply_WMonotonicity}.

An example of such a separable and monotone reward can be found in multi-population predator prey models where the reward can be expressed as a network zero-sum game:
\begin{align}
    &r^i(x^i, a^i, \mu) 
    \notag
    \\
    &= \bar r^i(x^i, a^i) + \underbrace{\hat r^i(x^i, \mu^i) + \sum_{j\neq i}\mu^j(x^i)\hat r^{i,j}(x^i)}_{=\tilde r^i(x^i,\mu)}
\label{eq:reward_multi}
\end{align}
if $\forall x\in\mathcal{X}, \hat{r}^{i,j}(x) = - \hat{r}^{j, i}(x)$ and if $\forall \mu \neq \mu', \forall i, \sum \limits_{x \in \mathcal{X}} \Big(\mu^i(x^i)-{\mu'}^i(x^i)\Big)\Big(\hat r^i(x^i, \mu^i)-\hat r^i(x^i, {\mu'}^i)\Big)\leq 0$ (or with a strict inequality).

{\bf Nash Equilibrium (\Nash):} A \Nash is a vector of policies for all populations that has a $0$ exploitability. The existence of a \Nash in MFGs has been studied in many settings~ \cite{cardaliaguet2010notes,MR3134900,carmona2018probabilisticI-II}. In our framework, it is a consequence of the convergence of the Fictitious play dynamics in monotone games which will be introduced later and proved in Appendix~\ref{fp_proof}.

\begin{proposition}[Existence and uniqueness of Nash]\label{lem:swm-uniqness}
    Any weakly monotone \MPMFG admits a \Nash. Besides, if the weak monotonicity is strict, the \Nash is unique.
\end{proposition}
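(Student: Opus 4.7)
The proof splits naturally into existence and uniqueness, which I would handle by different tools.

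For existence, a Kakutani fixed-point argument is the cleanest route. The joint policy space $\prod_{i=1}^{N_p}\prod_{n=0}^{N} (\Delta\mathcal{A})^{\mathcal{X}}$ is a nonempty, compact, convex subset of a Euclidean space; the population-response map $\pi \mapsto \mu^{\pi}$ defined by the forward equation~\eqref{eq:forwardeq} is continuous; and for fixed $\mu$, $J^i(\cdot,\mu)$ is continuous in $\pi^i$ (in fact multilinear in the per-step policies, once one unrolls the backward recursion on $Q^{i,\pi^i,\mu}$). Hence the best-response correspondence $\pi \mapsto \prod_i \arg\max_{\tilde\pi^i} J^i(\tilde\pi^i, \mu^{\pi})$ is nonempty-, convex-, compact-valued and upper hemicontinuous, so Kakutani's theorem yields a fixed point $\pi^\star$. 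By construction $\phi^i(\pi^\star)=0$ for every $i$, so $\pi^\star$ is a \Nash. Alternatively, as the authors suggest, existence can be read off the convergence of Fictitious Play established in Appendix~\ref{fp_proof}, since any limit point of the FP iterates must be a \Nash.

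For uniqueness under strict weak monotonicity, I would argue by contradiction using Lemma~\ref{lem:wmon-tildeM}. Suppose $\pi$ and $\pi'$ are two \Nash with $\pi\neq\pi'$. Since $\pi^i$ is a best response against $\mu^{\pi}$, we have $J^i(\pi^i, \mu^{\pi}) \geq J^i({\pi'}^i, \mu^{\pi})$; symmetrically, $J^i({\pi'}^i, \mu^{\pi'}) \geq J^i(\pi^i, \mu^{\pi'})$. Summing these two inequalities over $i\in\{1,\dots,N_p\}$ yields exactly $\tilde{\mathcal{M}}(\pi,\pi')\geq 0$, directly contradicting the strict version of Lemma~\ref{lem:wmon-tildeM}, which asserts $\tilde{\mathcal{M}}(\pi,\pi')<0$ as soon as $\pi\neq\pi'$.

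The one delicate point I anticipate is a mismatch between the two notions of strictness: strict weak monotonicity is phrased in terms of state--action occupation measures $\rho^i_n$, while Lemma~\ref{lem:wmon-tildeM} talks about policies, and two policies differing only on unreachable states induce the same occupation measure. Since $J^i$ depends on $\pi$ only through the induced occupation measures, the contradiction argument really rules out any difference in the induced distributions $(\mu^{\pi},\pi\cdot\mu^{\pi})$; I would therefore formulate uniqueness at the level of induced state--action occupation measures (the natural notion in MFGs), from which policy uniqueness on the support of $\mu^{\pi^\star}$ follows immediately.
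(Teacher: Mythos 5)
Your uniqueness argument is exactly the paper's (Appendix~\ref{app:swm-uniqueness}): decompose $\tilde{\mathcal{M}}(\pi,\pi')$ into the two deviation gaps $\sum_i \big[J^i(\pi^i,\mu^{\pi}) - J^i({\pi'}^i,\mu^{\pi})\big]$ and $\sum_i \big[J^i({\pi'}^i,\mu^{\pi'}) - J^i(\pi^i,\mu^{\pi'})\big]$, observe both are non-negative at a pair of equilibria, and contradict strict negativity. Your closing caveat is well taken and in fact sharper than the paper's own statement: strict weak monotonicity only forces $\tilde{\mathcal{M}}(\pi,\pi')<0$ when the induced state--action occupation measures differ, so the contradiction only rules out two equilibria with distinct occupation measures; the paper concedes this only later (the lemma in Appendix~\ref{proof_lyapunov} on uniqueness of the equilibrium \emph{distribution}, with the remark that this does not imply uniqueness of the policy). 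Stating uniqueness at the level of occupation measures, as you propose, is the honest formulation.

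The gap is in your primary existence route. Kakutani requires the best-response correspondence to be convex-valued, but $J^i(\cdot,\mu)$ is multilinear --- not concave --- in the behavioral policy $(\pi^i_n(\cdot|x))_{n,x}$, and the argmax of a multilinear function over a product of simplices need not be convex. Concretely, with two time steps, if action $a$ at $x_0$ leads to $x_1$ where only action $c$ is rewarded and action $b$ leads to $x_2$ where only $d$ is rewarded, then ``play $a$ then $c$ everywhere'' and ``play $b$ then $d$ everywhere'' are both optimal, yet their midpoint achieves only half the value; the optimal set is not convex. The standard repair is to run the fixed-point argument on state--action occupation measures (where $J^i(\cdot,\mu)$ is linear, so the optimal set is a face of a polytope, hence convex, and the induced mean-field flow depends linearly on the occupation measure) or directly on the flow $\mu$. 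Your fallback --- reading existence off Theorem~\ref{thm:fp_FH}, since exploitability is continuous, the FP trajectory lives in a compact set, and $\phi(\pi^t)\to 0$ forces any accumulation point to have zero exploitability --- is precisely what the paper does, and is the route you should lead with.
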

\begin{proof} The existence result follows from Theorem~\eqref{thm:fp_FH}, while the uniqueness property is proven in  Appendix~\ref{app:swm-uniqueness}.\end{proof}

\section{Background on Fictitious Play}
One can extend the Fictitious Play work of~\citeauthor{perrin2020fictitious} to a multi-population setting. In the Multi-Population case, the Fictitious play process is defined as follows. Let first picking $1$ as an arbitrary but classical reference time. For $t<1$, we consider a fixed uniform policy for all representative player $i$ at all time-step $n$ denoted $\pi^{i,br}_{n,t<1}$ and inducing a distribution $\mu^{i,br}_{n,t}$. We define  $\forall t\geq 1$ the distribution $\mu^{i}_{n,t}$ as:
\begin{align*}
    \forall i, n,\; \mu^{i}_{n,t}(x^i) = \frac{1}{t} \int \limits_{s=0}^t \mu^{i,br}_{n,s} (x^i)ds\,,
\end{align*}
where, for all $t\geq 1$, $\mu^{i,br}_{n,t}$ is the distribution of a best response policy $\pi^{i,br}_{n,t}$ to $\mu^{i}_{n,t}(x^i)$.
The policy $\pi^i_{n,t}$ of the distribution $\mu^{i}_{n,t}$ verifies the following equation (see~\citeauthor{perrin2020fictitious}): for all $i, n, x^i, a^i$,
\begin{align*}
    &\pi^i_{n,t}(a^i|x^i)\int \limits_{s=0}^t \mu^{i,br}_{n,s} (x^i) ds = \int \limits_{s=0}^t \pi^{i,br}_{n,s}(a^i|x^i) \mu^{i,br}_{n,s} (x^i) ds
\end{align*}

\begin{theorem}
\label{thm:fp_FH}
If a \MPMFG satisfies the weak monotony assumption, the exploitability is a strong Lyapunov function of the Fictitious Play dynamical system, $\forall t \geq 1$:
$\frac{d}{dt} \phi(\pi^t) \leq - \frac{1}{t} \phi(\pi^t).$
Hence $\phi(\pi^t) = O(\frac{1}{t})$.
\end{theorem}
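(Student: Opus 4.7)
The plan is to show that $\psi(t) := t\phi(\pi^t)$ is non-increasing on $[1,\infty)$, which is equivalent to $\dot\phi(\pi^t) \leq -\phi(\pi^t)/t$ (since $\dot\psi = \phi + t\dot\phi$) and yields $\phi(\pi^t) \leq \phi(\pi^1)/t = O(1/t)$ by integration. Introduce the joint state--action occupation measures $\rho^{i,t}_n(x,a) := \mu^{i,t}_n(x)\pi^{i,t}_n(a|x)$ and $\rho^{i,br,t}_n(x,a) := \mu^{i,br,t}_n(x)\pi^{i,br,t}_n(a|x)$; the cumulative reward recalled after the forward equation rewrites as $J^i(\pi^i,\mu) = F_i(\rho^{i,\pi^i},\mu) := \sum_n\sum_{x,a}\rho^{i,\pi^i}_n(x,a)\,r^i(x,a,\mu_n)$, which is linear in its $\rho$-argument, and since the transition kernel does not depend on $\mu$, $\rho^{i,\pi^i}$ itself does not depend on $\mu$. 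The multi-population FP identity displayed just before the theorem rewrites as $t\rho^{i,t}_n = \int_0^t \rho^{i,br,s}_n\,ds$ and $t\mu^{i,t}_n = \int_0^t \mu^{i,br,s}_n\,ds$, so that $\dot\rho^{i,t} = (\rho^{i,br,t}-\rho^{i,t})/t$ and $\dot\mu^{i,t} = (\mu^{i,br,t}-\mu^{i,t})/t$.

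Decompose $\phi(\pi^t) = V_t - U_t$ with $V_t := \sum_i\max_{\pi^i} J^i(\pi^i,\mu^t)$ and $U_t := \sum_i F_i(\rho^{i,t},\mu^t)$. The envelope theorem applied to $V_t$ at the optimizer $\pi^{i,br,t}$ gives $\dot V_t = \sum_i \nabla_\mu F_i(\rho^{i,br,t},\mu^t)\cdot\dot\mu^t$, while the chain rule and linearity of $F_i$ in $\rho$ give
\begin{align*}
\dot U_t &= \sum_i F_i(\dot\rho^{i,t},\mu^t) + \sum_i \nabla_\mu F_i(\rho^{i,t},\mu^t)\cdot\dot\mu^t \\
 &= \frac{\phi(\pi^t)}{t} + \sum_i \nabla_\mu F_i(\rho^{i,t},\mu^t)\cdot\dot\mu^t,
\end{align*}
since $\sum_i F_i(\dot\rho^{i,t},\mu^t) = \frac{1}{t}\sum_i [F_i(\rho^{i,br,t},\mu^t)-F_i(\rho^{i,t},\mu^t)] = \phi(\pi^t)/t$. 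Subtracting and substituting $\dot\mu^t = (\mu^{br,t}-\mu^t)/t$ produces the identity
\[
\dot\phi(\pi^t) = -\frac{\phi(\pi^t)}{t} + \frac{\mathcal R_t}{t}, \quad \mathcal R_t := \sum_i \sum_n \sum_{x,a}(\rho^{i,br,t}_n-\rho^{i,t}_n)(x,a)\, \nabla_\mu r^i(x,a,\mu^t_n)\cdot(\mu^{br,t}_n-\mu^t_n).
\]

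It then suffices to show $\mathcal R_t \leq 0$, which is precisely the linearization of inequality~\eqref{Weak_Monotonicity}. Applying Lemma~\ref{lem:wmon-tildeM} to the consistent pair $(\rho_\epsilon,\mu_\epsilon) := (\rho^t + \epsilon(\rho^{br,t}-\rho^t),\mu^t + \epsilon(\mu^{br,t}-\mu^t))$ against $(\rho^t,\mu^t)$, dividing by $\epsilon^2 > 0$, and sending $\epsilon\to 0^+$, yields exactly $\mathcal R_t \leq 0$ under a mild $C^1$-in-$\mu$ regularity of $r^i$. Combined with the display above, this establishes $\dot\phi(\pi^t) \leq -\phi(\pi^t)/t$, i.e.\ $\dot\psi(t) \leq 0$, and integrating on $[1,t]$ gives $\phi(\pi^t) \leq \phi(\pi^1)/t$. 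The main obstacle is the joint use of the envelope theorem (for differentiability of $V_t$) and the passage from the integrated weak monotonicity to its derivative form: both require smoothness of $r^i$ in $\mu$, which is automatic in the separable framework of Lemma~\ref{lem:sep-mon-wmon} and covers in particular the predator--prey rewards~\eqref{eq:reward_multi}.
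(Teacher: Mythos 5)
Your proposal is correct and follows essentially the same route as the paper's proof in Appendix~\ref{fp_proof}: both rewrite the exploitability in terms of state--action occupation measures, differentiate using the envelope theorem at the best response, identify the $-\phi/t$ term from $\rho^{br,t}-\rho^{t}=t\dot\rho^{t}$, and kill the remaining cross term by linearizing the weak monotonicity inequality (the paper's Property~\ref{prop:mono-r-dmu} takes the limit along the trajectory as $\tau\to 0$, while you take it along the segment from $\rho^{t}$ to $\rho^{br,t}$ as $\epsilon\to 0$ --- the same quadratic form since $\dot\rho^{t}\propto\rho^{br,t}-\rho^{t}$). The only cosmetic fix: your bound $\mathcal{R}_t\le 0$ should be derived from the weak monotonicity \emph{definition} applied to the consistent pair $(\rho_\epsilon,\mu_\epsilon)$ rather than from Lemma~\ref{lem:wmon-tildeM}, which is stated for distributions induced by policies.
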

\begin{proof} This is an extension to multi-population of Theorem~1 of \cite{perrin2020fictitious}.
The full proof is left in Appendix~\ref{fp_proof}.
\end{proof}

\section{Online Mirror Descent: algorithm and convergence result}
We now turn to the Online Mirror Descent Algorithm and introduce a regularizer $h: \Delta \mathcal{A} \rightarrow \mathbb{R}$, that is assumed to be $\rho$-strongly convex for some constant $\rho>0$.
Furthermore, we will assume from this point forward that the regularizer 
 $h$ is \emph{steep}, i.e., $\norm{\nabla h(\pi)} \to \infty$ whenever $\pi$ approaches the   of $\Delta \mathcal{A}$; The classic negentropy regularizer, which results to replicator dynamics is the prototypical example of this class.
Denote by $h^*: \mathbb{R}^{|\mathcal{A}|} \rightarrow \mathbb{R}$ its convex conjugate defined by $h^*(y) = \max \limits_{p \in \Delta\mathcal{A}} [\langle y,p \rangle - h(\pi)]$. Since $h$ is differentiable almost everywhere, we have, for almost every $y$,   
\begin{equation}
\label{eq:Dhstar-Gamma}
    \Gamma(y) 
    := \nabla h^*(y) 
    = \argmax_{p\in\Delta\mathcal{A}} [\langle y,p \rangle - h(\pi)].
\end{equation}
{\bf Discrete Time Online Mirror Descent: } The OMD algorithm is implemented as described in Algorithm~\ref{Algo_OMD}. At each iteration, the first step consists in computing, for each population, the evolution of the population's distribution by using the current policy, see~\eqref{eq:forwardeq}. In the second step, each population's policy is updated. This update is done by first updating the corresponding $y$ variable and then obtaining the policy thanks to the function $\Gamma$. We have
for all $t > 0, i\in \{1,\dots,N_p\}, n\in \{0,\dots,N\}$,
\begin{align*}
    y^i_{n,t+1}(x^i,a^i) 
    &= \sum \limits_{s=0}^{t} \alpha Q^{i,\pi^i_s, \mu^{\pi_s}}_n(x^i,a^i),
    \\
    \pi^i_{n,t+1}(.|x^i) 
    &= \Gamma(y^i_{n,t+1}(x^i,.)).
\end{align*}
\begin{algorithm}[tb]
   \caption{Online Mirror Descent (OMD)}
   \label{Algo_OMD}
\begin{algorithmic}
   \STATE {\bfseries Input:} $\alpha, y^{i}_{n,0}=0$ for all $i,n$; $t_{max}$.
   \REPEAT
   \STATE Forward Update: Compute for all $i$, $\mu^{i, \pi^{(t)}}$
   \STATE Backward Update: Compute for all $i$, $Q^{i, \pi^{i}_t}, \mu^{\pi_t}$
      \STATE {Update for all $i,n, x, a$, 
   \begin{align*}
   y^{i}_{n,t+1}(x,a) 
   &= y^{i}_{n,t}(x,a) + \alpha Q^{i, \pi^{i}_t, \mu^{\pi_t}}_n(x, a)
   \\
   \pi^{i}_{n,t+1}(. |x) 
   &= \Gamma(y^{i}_{n,t+1}(x,.))
   \end{align*}
   }
   \UNTIL{$t=t_{max}$}
\end{algorithmic}
\end{algorithm}
{\bf Continuous Time Online Mirror Descent: } We study the theoretical convergence of the continuous time version of Algorithm~\ref{Algo_OMD}. Namely, the Continuous Time Online Mirror Descent (CTOMD) algorithm \cite{mertikopoulos2017cycles} is defined as: for all $i\in \{1,\dots,N_p\}, n\in \{0,\dots,N\}$, $y^i_{n,0} = 0$, and for all $t \in \mathbb{R}_+$,
\begin{align}
    y^i_{n,t}(x^i,a^i) &= \int \limits_{0}^{t} Q^{i,\pi^i_s, \mu^{\pi_s}}_n(x^i,a^i) ds,
    \label{eq:OMD-y}
    \\
    \pi^i_{n,t}(.|x^i) &= \Gamma(y^i_{n,t}(x^i,.)).
    \label{eq:OMD-pi}
\end{align}

From here on, unless otherwise specified, we assume that the weak monotonicity condition holds and denote by $\pi^*$ a \Nash, whose existence follows from Proposition \ref{lem:swm-uniqness}. We let $y^{i,*}: (x^i,a^i) \mapsto y^{i,*}(x^i,a^i)$ be the corresponding dual variable such that $\pi^{i,*}(.|x^i) = \Gamma(y^{i,*}(x^i,.))$ for every $i$.

{\bf Measure of similarity with the \Nash $\pi^*$:} 
 Based on the regularizer $h$, we define in the dual space the following measure of similarity $H: \mathbb{R}^{|\mathcal{A}|} \to \mathbb{R}$ with the \Nash $\pi^*$:
\begin{align*}
    H(y) 
    &:= \sum_{i=1}^{N_p} \sum \limits_{n=0}^N \sum \limits_{x^i \in \mathcal{X}} \mu^{i,\pi^*}_n(x^i)\Big[h^*(y^i_{n,t}(x^i,.))
    \\
    & - h^*(y^{i,*}(x^i,.)) - \langle \pi^{i,*}_{n,t},y^i_{n,t}(x^i,.)-y^{i,*}_{n,t}(x^i,.) \rangle \Big].
\end{align*}

As detailed below, this quantity will be decreasing through the iterations of CTOMD. Observe that since the regularizer is steep and thus always maps in the interior of the simplex, it can also be expressed in terms of Bregman divergence as:

\begin{align*}
    H(y) 
    &=\sum_{i=1}^{N_p} \sum \limits_{n=0}^N \sum \limits_{x^i \in \mathcal{X}} \mu^{i,\pi^*}_n(x^i)[D_{h}(\pi^{i,*}_n(x^i,\cdot), \pi^i_n(x^i,\cdot))].
\end{align*}

which is always non-negative. 
Here $D_{F}$ denotes the Bregman divergence associated with a map $F$ and defined as    $D_{F}(p,q) := F(p) - F(q) - \langle \nabla F(q), p-q \rangle.$ In this derivation we have used
  known relations between Fenchel couplings and Bregman divergences (e.g.,~\citet{mertikopoulos2016learning})
  and denoted $\pi^i_n:=\Gamma(y^i_n)$.  Thus, the similarity measure $H$ can also be expressed in terms of proximity between policies.  

We are now in position to characterize the dynamics of the similarity to the Nash mapping via the following lemma, whose proof is provided in Appendix \ref{sec:app-OMD}.

\begin{lemma}[Similarity to Nash dynamics]\label{Lemma_Similarity}
In CTOMD, the measure of similarity $H$ to the Nash $\pi^*$ satisfies
\begin{align}
    \frac{d}{dt} H(y_t) 
    &= \Delta J(\pi_t, \pi^*)  + \tilde {\mathcal{M}}(\pi_t, \pi^*) 
\end{align}
where $\Delta J(\pi_t, \pi^*)  := \sum_{i=1}^{N_p}J^i(\pi^i_t,\mu^{\pi^*}) -  J^i(\pi^{i,*}, \mu^{\pi^*})$ is always non-positive, and where the weak monotonicity metric $\tilde{\mathcal{M}}$ is defined in \eqref{Weak_Monotonicity}.
\end{lemma}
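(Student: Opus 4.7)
The plan is to differentiate $H(y_t)$ directly in time, substitute the CTOMD flow $\dot y^i_{n,t}(x^i,\cdot) = Q^{i,\pi^i_t, \mu^{\pi_t}}_n(x^i,\cdot)$ from \eqref{eq:OMD-y}, and then convert the resulting inner-product expression into the claimed combination of $\Delta J$ and $\tilde{\mathcal{M}}$ via an MFG performance-difference identity. Throughout, the quantities $\pi^{i,*}$, $y^{i,*}$, and $\mu^{i,\pi^*}$ are independent of $t$, and by steepness of $h$ the conjugate $h^*$ is differentiable with $\nabla h^* = \Gamma$.

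First I would compute $\frac{d}{dt} H(y_t)$ term by term. The $h^*(y^{i,*})$ contribution vanishes; the chain rule applied to $h^*(y^i_{n,t}(x^i,\cdot))$ gives $\langle \pi^i_{n,t}(\cdot|x^i),\, Q^{i,\pi^i_t, \mu^{\pi_t}}_n(x^i,\cdot)\rangle$, since $\pi^i_{n,t} = \Gamma(y^i_{n,t})$ by \eqref{eq:OMD-pi}; and the linear term differentiates to $\langle \pi^{i,*}_n(\cdot|x^i),\, Q^{i,\pi^i_t, \mu^{\pi_t}}_n(x^i,\cdot)\rangle$. Summed against the weights $\mu^{i,\pi^*}_n(x^i)$, this yields
\begin{equation*}
\frac{d}{dt} H(y_t) = \sum_{i,n,x^i} \mu^{i,\pi^*}_n(x^i) \langle \pi^i_{n,t}(\cdot|x^i) - \pi^{i,*}_n(\cdot|x^i),\, Q^{i,\pi^i_t, \mu^{\pi_t}}_n(x^i,\cdot)\rangle.
\end{equation*}

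Next I would invoke the MFG performance-difference identity: for any two policies $\pi, \pi'$ and any distribution sequence $\mu$,
\begin{equation*}
J^i(\pi', \mu) - J^i(\pi, \mu) = \sum_{n,x,a} \mu^{i,\pi'}_n(x) [\pi'_n(a|x) - \pi_n(a|x)]\, Q^{i,\pi,\mu}_n(x,a),
\end{equation*}
which follows by telescoping the backward Bellman equation for $Q^{i,\pi,\mu}$ along the trajectory law of $\pi'$, crucially using that $p(x'|x,a)$ is independent of $\mu$, so $\mu^{i,\pi^*}$ is exactly the state distribution generated by $\pi^{i,*}$ starting from $\mu^i_0$. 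Applying this with $\pi' = \pi^{i,*}$, $\pi = \pi^i_t$, $\mu = \mu^{\pi_t}$, and flipping signs shows the previous display equals $\sum_i \big[ J^i(\pi^i_t, \mu^{\pi_t}) - J^i(\pi^{i,*}, \mu^{\pi_t}) \big]$.

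Finally, an algebraic step closes the argument: adding and subtracting $J^i(\pi^i_t, \mu^{\pi^*}) - J^i(\pi^{i,*}, \mu^{\pi^*})$ in each summand splits this quantity into exactly $\Delta J(\pi_t,\pi^*) + \tilde{\mathcal{M}}(\pi_t,\pi^*)$ as defined in the statement and in \eqref{Weak_Monotonicity}. Non-positivity of $\Delta J$ is immediate, since $\pi^{i,*}$ is a best response to $\mu^{\pi^*}$. The main obstacle is the performance-difference identity: stating it carefully in MFG notation, tracking which policy drives the trajectory measure versus which policy is evaluated, and leveraging that transitions do not depend on $\mu$, is where a notational slip would derail the rest of the computation.
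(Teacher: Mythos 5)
Your proposal is correct and follows essentially the same route as the paper's proof: differentiate $H$ along the flow, use $\nabla h^* = \Gamma$ to obtain the weighted inner product $\sum_{i,n,x^i}\mu^{i,\pi^*}_n(x^i)\langle \pi^i_{n,t}-\pi^{i,*}_n, Q^{i,\pi^i_t,\mu^{\pi_t}}_n\rangle$, identify it with $\sum_i[J^i(\pi^i_t,\mu^{\pi_t})-J^i(\pi^{i,*},\mu^{\pi_t})]$, and split by adding and subtracting the $\mu^{\pi^*}$ terms. The only cosmetic difference is that you invoke the performance-difference identity as a named lemma, whereas the paper derives exactly that identity inline by expanding the Bellman recursion for $Q^{i,\pi^i_t,\mu^{\pi_t}}$ and telescoping against the forward equation for $\mu^{i,\pi^*}$.
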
 

{\bf Convergence to the Nash for \MPMFG{}s:} We now turn to the main theoretical contribution of the paper, by deriving the convergence of CTOMD to the set of \Nash for \MPMFG{}s.

\begin{theorem}[Convergence of CTOMD]
\label{thm_convergence}
If a \MPMFG satisfies $\tilde {\mathcal{M}}(\pi, \pi')<0$ if $\mu^{\pi}\neq \mu^{\pi'}$ and 0 otherwise, then $(\pi_t)_{t \ge 0}$ generated by CTOMD given in~\eqref{eq:OMD-pi} converges to the set of Nash equilibria of the game as $t \to +\infty$. 
\end{theorem}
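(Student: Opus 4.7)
The natural approach is to treat $H(y_t)$ as a Lyapunov function for CTOMD, combine its dissipation with a Barbalat-type argument, and then exploit the strict monotonicity-in-distribution hypothesis.

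First, I would use Lemma~\ref{Lemma_Similarity} to observe that
\begin{equation*}
    \frac{d}{dt}H(y_t) \;=\; \Delta J(\pi_t,\pi^*) + \tilde{\mathcal{M}}(\pi_t,\pi^*) \;\le\; 0,
\end{equation*}
since $\Delta J \le 0$ by optimality of $\pi^{i,*}$ against $\mu^{\pi^*}$ and $\tilde{\mathcal{M}}\le 0$ by weak monotonicity (which is implied by the theorem's stronger assumption). Since $H(y_t) \ge 0$ by its Bregman-divergence representation, $t \mapsto H(y_t)$ is non-increasing and bounded below, hence converges to some $H_\infty \ge 0$. Integrating yields $\int_0^{+\infty}\!\big[-\Delta J(\pi_t,\pi^*) - \tilde{\mathcal{M}}(\pi_t,\pi^*)\big]\,dt = H(y_0)-H_\infty < +\infty$, and both non-negative summands are individually integrable on $[0,+\infty)$.

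Second, I would invoke Barbalat's lemma to deduce $\Delta J(\pi_t,\pi^*) \to 0$ and $\tilde{\mathcal{M}}(\pi_t,\pi^*) \to 0$. This requires the integrand to be uniformly continuous in $t$, which follows once $t\mapsto\pi_t$ is shown to be Lipschitz. The $Q$-function is uniformly bounded (rewards are bounded over a finite state-action space, horizon is finite), so $\dot{y}_t = Q_t$ is uniformly bounded. The $\rho$-strong convexity of $h$ makes $\Gamma = \nabla h^*$ globally $(1/\rho)$-Lipschitz, so $\pi_t = \Gamma(y_t)$ is Lipschitz in $t$. Continuity of $\pi \mapsto \Delta J(\pi,\pi^*)$ and $\pi \mapsto \tilde{\mathcal{M}}(\pi,\pi^*)$---both polynomial in $\pi$ through the forward recursion for $\mu^\pi$ and the backward recursion for $J^i$---closes the argument.

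Third, any accumulation point $\bar\pi$ of $(\pi_t)_{t\ge 0}$, whose existence follows from compactness of the product of policy simplices, satisfies by continuity $\tilde{\mathcal{M}}(\bar\pi,\pi^*)=0$ and $\Delta J(\bar\pi,\pi^*)=0$. The theorem's hypothesis forces $\mu^{\bar\pi} = \mu^{\pi^*}$, while $\Delta J(\bar\pi,\pi^*)=0$ means each $\bar\pi^i$ attains the best-response value against $\mu^{\pi^*} = \mu^{\bar\pi}$; hence $\bar\pi$ is itself a Nash equilibrium. A standard contradiction argument---if $\pi_t$ stayed $\varepsilon$-away from the Nash set, a convergent sub-subsequence would produce a non-Nash accumulation point---upgrades this to convergence of the full trajectory to the set of Nash equilibria.

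The main obstacle is the uniform-continuity step required for Barbalat: one is tempted to appeal to smoothness of $\Gamma$, but the steepness of $h$ makes $\Gamma$ non-Lipschitz near the boundary of the simplex, so the control must instead come from the $\rho$-strong convexity of $h$, which is precisely the standing assumption on the regularizer. A secondary subtlety is that the reference Nash $\pi^*$ need not be unique as a policy under the hypothesis of the theorem (only $\mu^{\pi^*}$ is pinned down), but because the result concerns convergence to the \emph{set} of Nash equilibria, this causes no issue.
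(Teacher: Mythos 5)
Your proposal is correct and follows essentially the same route as the paper: both use the similarity measure $H$ (the paper's $\Xi$) as a Lyapunov function via Lemma~\ref{Lemma_Similarity}, and both identify the set where the dissipation $\Delta J + \tilde{\mathcal{M}}$ vanishes as exactly the Nash set using the theorem's hypothesis. The paper simply asserts that a strict Lyapunov function yields convergence to the set of equilibria, whereas you make this step explicit via integrability, Barbalat's lemma (justified by the $(1/\rho)$-Lipschitzness of $\Gamma = \nabla h^*$ and boundedness of the $Q$-functions), and a compactness argument on accumulation points --- a welcome amount of added rigor, but not a genuinely different proof.
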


\begin{proof} 
The proof is left in appendix~\ref{proof_lyapunov}.
\end{proof}
Thanks to Lemma~\ref{lem:wmon-tildeM} together with Proposition \ref{lem:swm-uniqness}, we easily deduce the convergence to the unique \Nash in some more stringent classes of \MPMFG{}s. 
\begin{corollary}[Convergence of CTOMD for weakly monotone MFG]
For any strictly weakly monotone \MPMFG, $(\pi_t)_{t \ge 0}$ generated by CTOMD given in~\eqref{eq:OMD-pi} converges to the unique \Nash of the game, as $t \to +\infty$. 
\end{corollary}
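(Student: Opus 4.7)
My plan is to glue together the three preceding results. Strict weak monotonicity gives existence and uniqueness of a \Nash $\pi^*$ via Proposition~\ref{lem:swm-uniqness}, so the ``set of Nash equilibria'' appearing in Theorem~\ref{thm_convergence} collapses to the singleton $\{\pi^*\}$. It therefore suffices to verify the hypothesis of Theorem~\ref{thm_convergence}, namely that $\tilde{\mathcal{M}}(\pi,\pi') < 0$ whenever $\mu^\pi \neq \mu^{\pi'}$ and $\tilde{\mathcal{M}}(\pi,\pi') = 0$ otherwise; once that is done, Theorem~\ref{thm_convergence} delivers $\pi_t \to \pi^*$ directly.

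The ``$=0$'' branch is immediate by unfolding the definition of $\tilde{\mathcal{M}}$ in \eqref{Weak_Monotonicity}: if $\mu^\pi = \mu^{\pi'}$, then $J^i(\pi^i, \mu^\pi) = J^i(\pi^i, \mu^{\pi'})$ and $J^i({\pi'}^i, \mu^{\pi'}) = J^i({\pi'}^i, \mu^\pi)$, so the four summands cancel pairwise and $\tilde{\mathcal{M}}(\pi,\pi')=0$. For the strict negativity when $\mu^\pi \neq \mu^{\pi'}$, I would invoke the strict version of Lemma~\ref{lem:wmon-tildeM}. Since the state marginals differ, the induced state-action joint distributions $\rho_n$, $\rho'_n$ must differ at some time $n$, so strict weak monotonicity contributes a strictly negative term at that time step while weak monotonicity keeps the remaining time steps non-positive; summing across time yields a strict inequality overall. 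Applying Theorem~\ref{thm_convergence} then completes the argument.

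I do not anticipate a genuine obstacle here. The only step requiring a moment's care is that strict weak monotonicity is phrased in terms of joint state-action distributions $\rho_n$, whereas the hypothesis of Theorem~\ref{thm_convergence} is phrased in terms of state distributions $\mu^\pi$; but the former condition clearly triggers whenever $\mu^\pi \neq \mu^{\pi'}$, since the state marginals are themselves marginals of the $\rho$'s, so the translation between the two is automatic. Once this small bridge is made explicit, the entire corollary fits in a handful of lines.
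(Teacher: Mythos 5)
Your proposal is correct and follows essentially the same route as the paper, which disposes of this corollary in one line by combining Proposition~\ref{lem:swm-uniqness} (uniqueness), Lemma~\ref{lem:wmon-tildeM} (strict version), and Theorem~\ref{thm_convergence}. You simply make explicit the verification of Theorem~\ref{thm_convergence}'s hypothesis that the paper leaves implicit, and your per-time-step decomposition of $\tilde{\mathcal{M}}$ is exactly the one used in the appendix proof of Lemma~\ref{lem:wmon-tildeM}.
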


\begin{corollary}[Convergence of CTOMD for multi-population network zero sum MFG]\label{Cor_Multi_Pop}
For any strictly monotone and essentially zero-sum \MPMFG, $(\pi_t)_{t \ge 0}$ generated by CTOMD given in~\eqref{eq:OMD-pi} converges to the unique \Nash of the game, as $t \to +\infty$.
\end{corollary}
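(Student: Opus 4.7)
The plan is to reduce Corollary~\ref{Cor_Multi_Pop} to Theorem~\ref{thm_convergence}. For this, it suffices to verify that the combined assumptions---network zero-sum structure as in \eqref{eq:reward_multi} with antisymmetry $\hat r^{i,j}=-\hat r^{j,i}$ and \emph{strict} intra-population monotonicity of $\hat r^i$ in $\mu^i$---imply $\tilde{\mathcal{M}}(\pi,\pi')<0$ whenever $\mu^\pi\neq\mu^{\pi'}$ and $\tilde{\mathcal{M}}(\pi,\pi')=0$ otherwise. Uniqueness of the Nash will then come as a bonus from Proposition~\ref{lem:swm-uniqness}, since the same computation establishes strict weak monotonicity.

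The first step uses the cumulative reward identity $J^i(\pi^i,\mu)=\sum_{n,x^i,a^i}\mu^{i,\pi^i}_n(x^i)\pi^i_n(a^i|x^i)r^i(x^i,a^i,\mu_n)$ together with separability to strip off the $\bar r^i(x^i,a^i)$ contributions: they depend only on $\pi^i$ and not on the other populations, and hence cancel identically in the alternating four-term sum defining $\tilde{\mathcal{M}}$. Summing over actions, the remaining $\tilde r^i(x^i,\mu)$ part yields
\[
\tilde{\mathcal{M}}(\pi,\pi')=\sum_{i,n,x^i}\bigl(\mu^{\pi,i}_n(x^i)-\mu^{\pi',i}_n(x^i)\bigr)\bigl(\tilde r^i(x^i,\mu^\pi_n)-\tilde r^i(x^i,\mu^{\pi'}_n)\bigr),
\]
mirroring the form handled by Lemma~\ref{lem:sep-mon-wmon}. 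Substituting $\tilde r^i(x^i,\mu)=\hat r^i(x^i,\mu^i)+\sum_{j\neq i}\mu^j(x^i)\hat r^{i,j}(x^i)$ splits this quantity into an intra-population part $\sum_{i,n,x^i}(\mu^{\pi,i}_n-\mu^{\pi',i}_n)(x^i)(\hat r^i(x^i,\mu^{\pi,i}_n)-\hat r^i(x^i,\mu^{\pi',i}_n))$ and a cross-population bilinear form $\sum_{n,x}\sum_{i\neq j}\nu^i_n(x)\nu^j_n(x)\hat r^{i,j}(x)$ with $\nu^i:=\mu^{\pi,i}-\mu^{\pi',i}$.

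The cross-population part vanishes by antisymmetry: pairing the $(i,j)$ and $(j,i)$ summands gives $\nu^i_n(x)\nu^j_n(x)(\hat r^{i,j}(x)+\hat r^{j,i}(x))=0$. The intra-population part is non-positive by the monotonicity hypothesis, and \emph{strictly} negative as soon as $\mu^\pi\neq \mu^{\pi'}$ by picking any time $n$ at which the marginals differ. With the hypothesis of Theorem~\ref{thm_convergence} thereby established, convergence of $(\pi_t)$ to the set of Nash equilibria follows directly, and uniqueness comes from Proposition~\ref{lem:swm-uniqness} applied to the now-verified strict weak monotonicity. The only subtle point is the cross-term cancellation: it hinges on the fact that both arguments of the bilinear form are the \emph{same} discrepancy vector $\mu^\pi-\mu^{\pi'}$, which is exactly what makes skew-symmetry force the form to vanish. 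Everything else is bookkeeping once $\tilde{\mathcal{M}}$ is written in its bilinear-in-$\mu$ form above.
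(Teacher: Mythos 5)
Your proposal is correct and follows essentially the same route as the paper: it reconstructs the chain consisting of the separability/cancellation computation of Lemma~\ref{lem:sep-mon-wmon} (Appendix~\ref{Separable_Monotonicity_imply_WMonotonicity}), the antisymmetric cross-term cancellation of Appendix~\ref{multipop_reward}, Lemma~\ref{lem:wmon-tildeM}, and then Theorem~\ref{thm_convergence} plus Proposition~\ref{lem:swm-uniqness} for uniqueness. The only caveat, which the paper itself shares, is that the reduced bilinear form depends only on the state marginals, so the ``strictness'' really yields $\tilde{\mathcal{M}}<0$ only when $\mu^{\pi}\neq\mu^{\pi'}$ (and hence uniqueness of the equilibrium distribution); your statement of the verified hypothesis is phrased exactly at that level, so this is consistent.
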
 

It is worth noticing that the argumentation followed in our proof differs from the usual approaches on regret minimization arguments as e.g. in \cite{zinkevich2008regret}. 

{\bf Restriction to a single population MFG:}
Finally, considering the number of populations $N_p$ equals $1$, we deduce a convergence result of CTOMD to the \Nash of single population for strictly weakly monotone MFG.  

\begin{corollary}[Convergence of CTOMD for Single Population MFG]\label{Theorem_Single_Pop}
For any single population MFG satisfying the strictly weak monotonicity assumption, $(\pi^{(t)})_{t \ge 0}$ generated by CTOMD given in~\eqref{eq:OMD-pi} converges to the unique \Nash of the game, as $t \to +\infty$. 
\end{corollary}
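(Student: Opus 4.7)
The plan is to recognize this corollary as the direct specialization of Theorem~\ref{thm_convergence} (or equivalently of the preceding corollary for strictly weakly monotone \MPMFG{}s) to the case $N_p=1$. The whole task reduces to checking that strict weak monotonicity in the single-population setting delivers exactly the hypothesis $\tilde{\mathcal{M}}(\pi,\pi')<0$ whenever $\mu^{\pi}\neq\mu^{\pi'}$ and $=0$ otherwise, and then invoking uniqueness of the Nash to upgrade ``convergence to the set of \Nash{}s'' into ``convergence to the unique \Nash{}.''

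First, I would use Proposition~\ref{lem:swm-uniqness} to conclude that under strict weak monotonicity the \Nash{} $\pi^*$ exists and is unique; this means that once CTOMD is shown to converge to the set of \Nash{}s via Theorem~\ref{thm_convergence}, convergence to the single point $\pi^*$ follows immediately. Thus the real content of the proof is just the verification of the Theorem~\ref{thm_convergence} hypothesis.

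Next, I would verify that hypothesis in two short steps. If $\mu^{\pi}=\mu^{\pi'}$, then reading the definition of $\tilde{\mathcal{M}}$ in~\eqref{Weak_Monotonicity} and noting that $J^i(\cdot,\mu)$ only depends on the population trajectory through $\mu$, the four terms cancel pairwise and $\tilde{\mathcal{M}}(\pi,\pi')=0$. If instead $\mu^{\pi}\neq\mu^{\pi'}$, then necessarily $\pi\neq\pi'$ (by the contrapositive of the determinism of the forward equation~\eqref{eq:forwardeq}), so Lemma~\ref{lem:wmon-tildeM} applied under the strict form of weak monotonicity yields $\tilde{\mathcal{M}}(\pi,\pi')<0$. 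This is the entirety of the hypothesis of Theorem~\ref{thm_convergence}.

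Finally, I would apply Theorem~\ref{thm_convergence} in the $N_p=1$ case to obtain convergence of $(\pi_t)_{t\geq 0}$ to the set of \Nash{}s, and combine with the uniqueness above to conclude convergence to the unique \Nash{} as $t\to+\infty$. There is no genuine obstacle: the argument is pure bookkeeping showing that strict weak monotonicity for a single-population MFG is a special case of the condition already handled by Theorem~\ref{thm_convergence}, and the only mildly subtle point is ruling out the degenerate case $\pi\neq\pi'$ yet $\mu^{\pi}=\mu^{\pi'}$, which is handled trivially by the cancellation in the definition of $\tilde{\mathcal{M}}$.
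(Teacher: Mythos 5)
Your proposal is correct and follows essentially the same route as the paper: the corollary is obtained by specializing the strictly weakly monotone \MPMFG{} corollary (hence Theorem~\ref{thm_convergence}) to $N_p=1$, using Lemma~\ref{lem:wmon-tildeM} to verify the hypothesis $\tilde{\mathcal{M}}(\pi,\pi')<0$ when $\mu^{\pi}\neq\mu^{\pi'}$ (and the pairwise cancellation giving $0$ otherwise), and Proposition~\ref{lem:swm-uniqness} to upgrade set convergence to convergence to the unique \Nash. Your explicit check of the degenerate case $\pi\neq\pi'$ with $\mu^{\pi}=\mu^{\pi'}$ is a detail the paper leaves implicit, but it does not change the argument.
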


\begin{table*}[t]
\begin{center}
 \begin{tabular}{|c | c c c c|} 
 \hline
  Environment & $|\mathcal{X}|$ & $|\mathcal{X}|\times|\mathcal{A}|$ & OMD & FP  \\ 
 \hline\hline
 Garnet & $2\times10^{3}$ -- $2\times10^{4}$ & $2\times10^{4}$ -- $4\times10^{5}$ & $84$Ko -- $229$Ko & $168$Ko -- $458$Ko \\ 
 \hline
 Building & $8\times 10^{9}$ & $5.6\times 10^{10}$ & $0.21$To & $0.42$To \\ 
 \hline
 Common noise & $2.73\times10^{11}$ & $1.092\times10^{12}$ & $5.0$To & $10$To \\
 \hline
 Multi-Population medium & $5\times10^{7}$ & $2\times10^{8}$ & $0.93$Go & $1.9$Go \\
 \hline
 Multi-Population large & $8\times10^{8}$ & $3.2\times10^{9}$ & $73$Go & $146$Go \\
 \hline
 \end{tabular}

\caption{Number of states, action-states pairs \& RAM memory required for the experiments. $|\mathcal{X}| = \text{positions} \times \text{timesteps} \times \text{common noise} \times \text{number of populations})$.\label{table:nb_states}}
\end{center}
\end{table*}

\section{Numerical experiments}

We illustrate the theoretical convergence of CTOMD with an extensive empirical evaluation of OMD described in Algorithm \ref{Algo_OMD} within various settings involving single or multiple populations as well as non trivial topologies (videos available \href{https://www.youtube.com/channel/UCsyraNAh_zmwvChXLKZo_Eg}{here}). These settings are typically hardly tractable using classical numerical approximation schemes for partial differential equations. Besides, the scale of the numerical experiments grows up to $10^{12}$ states, establishing a new scalability benchmark in the MFG literature. We emphasize the diversity of tractable environments by considering (randomized MDP) Garnet settings, a twenty-storey high building evacuation, a crowd movement example in the presence of common noise and finally an essentially zero sum multi-population chasing game.

{\bf Experimental setup:} We compare OMD and FP with different learning rates $\alpha$. In discrete-time OMD, $\alpha$ appears in the backward update of $y$: $y^{i}_{n,t+1}(x,a) = y^{i}_{n,t}(x,a) + \alpha Q^{i, \pi^{i}_t, \mu^{\pi_t}}_n(x, a)$, whereas in discrete-time FP, it corresponds to how much we update the average policy with the new best response $\pi^{i}_{n,t+1}(x^i,a^i)$ given by $$\frac{ (1-\alpha_t) \mu^{i, \pi_t}_{n}(x^i) \pi^{i}_{n,t+1}(x^i,a^i) + \alpha_t \mu^{i, br}_{n,t}(x^i)\pi^{i,br}_{n,t}(x^i,a^i)}{(1-\alpha_t) \mu^{i, \pi_t}_{n}(x^i) + \alpha_t \mu^{i, br}_{n,t}(x^i)}.$$ FP is experimented with decreasing  $\alpha_t=\alpha/(2+t)$ or constant $\alpha_t=\alpha$ learning rate. This latter is referred to hereafter as \textit{FP damped}, while $\alpha=1$ corresponds to the fixed point iteration algorithm, \textit{i.e.} the population applies the last best response policy. The theoretical proof of convergence relies on restrictive conditions which only hold for a small class of games. We provide a thorough evaluation in Table~\ref{table:nb_states} of the complexity of the environments along with the memory required to compute our results. For OMD, we only need to store $y$ of size $|\mathcal{X}| \times |\mathcal{A}|$ and the distributions, of size $|\mathcal{X}|$. For FP, we need to store the last best response, the average policy, the last distribution and the average distribution, requiring a total of $2\times (|\mathcal{X}| \times |\mathcal{A}|) + 2\times |\mathcal{X}|$. In all the experiments, $h$ is the entropy: $h = -\sum_{a \in \mathcal{A}} \pi(a)\log(\pi(a))$. This implies that $h^*(y) = \log(\sum_a \exp(y(a)))$, and we find that $\Gamma$ is a softmax if we take the gradient of $h^*$.

\begin{figure*}[htbp]
    \centering
    \includegraphics[width=1\linewidth]{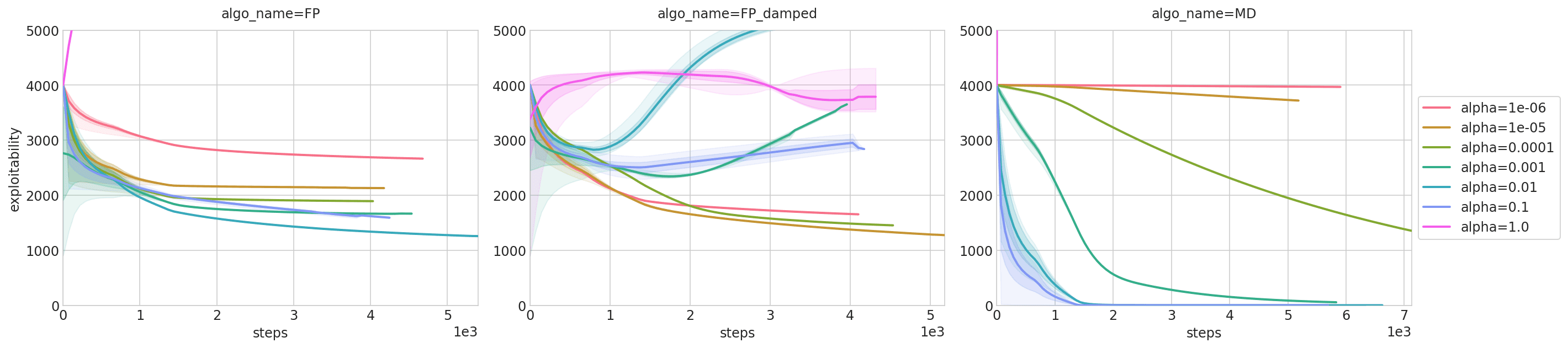}
    \vspace{-2em}
    \caption{5 Garnet sampled with param $n_x=20000$, $n_a=10$, $t=2000$, $s_f=10$} 
  \label{fig:main_garnet_plot} 
\end{figure*}
    
\subsection{Garnet}

We first evaluate Alg.~\ref{Algo_OMD} on a set of randomly generated problems (repeatability of our results for varying sizes).

{\bf Environment: } A garnet is an abstract and randomly generated MDP~\citep{archibald1995generation}. We adapt this concept to single-population MFGs by modifying the reward. In our case, a Garnet is built from the set of parameters $(n_x, n_a, n_b, s_f, \eta)$, with $n_x$ and $n_a$ respectively the numbers of states and actions. The term $n_b$ is a branching factor, and the transition kernel (independent of $\mu$) is built as follows: $n_b$ transiting states are drawn randomly without replacement, and the associated transition probabilities are obtained by partitioning the unit interval with $n_x-1$ uniformly sampled random points. The reward term $\tilde{r}(x,u)$ is set to 0 for $s_f$ states sampled randomly without replacement, for each of the remaining states it is set for all actions to a random value sampled uniformly in the unit interval.
We set $\bar{r}(s,\mu) = - \eta \log(\mu(x))$. This reward encourages the agents to spread out accross the MDP states and can model social distancing. This process generates a monotone MFG.

{\bf Numerical results: } 
Fig.~\ref{fig:main_garnet_plot} (main text) and~\ref{garnet_plot} (Appx.~\ref{subappx:garnet}) shows various Garnet experiments. We fix $s_f=10$, $t=2000$, $\eta=1$ and $n_b=1$ (deterministic dynamics) and vary $n_x\in\{2.10^3;2.10^4\}$ and $n_a\in\{10,20\}$. In each case, results are averaged over 5 randomly generated Garnets. We compare OMD to FP, damped or not. We observe that OMD consistently converges faster for the right choice of $\alpha$.  $\alpha=1$ might lead to unstable results while $\alpha=0.1$ consistently provides fast convergence to the Nash. In all cases, the number of states influences the convergence rate, but much less for OMD.

\begin{figure*}[htbp]
    \centering
    \begin{subfigure}{0.23\textwidth}
        \centering
        \includegraphics[width=1.0\linewidth]{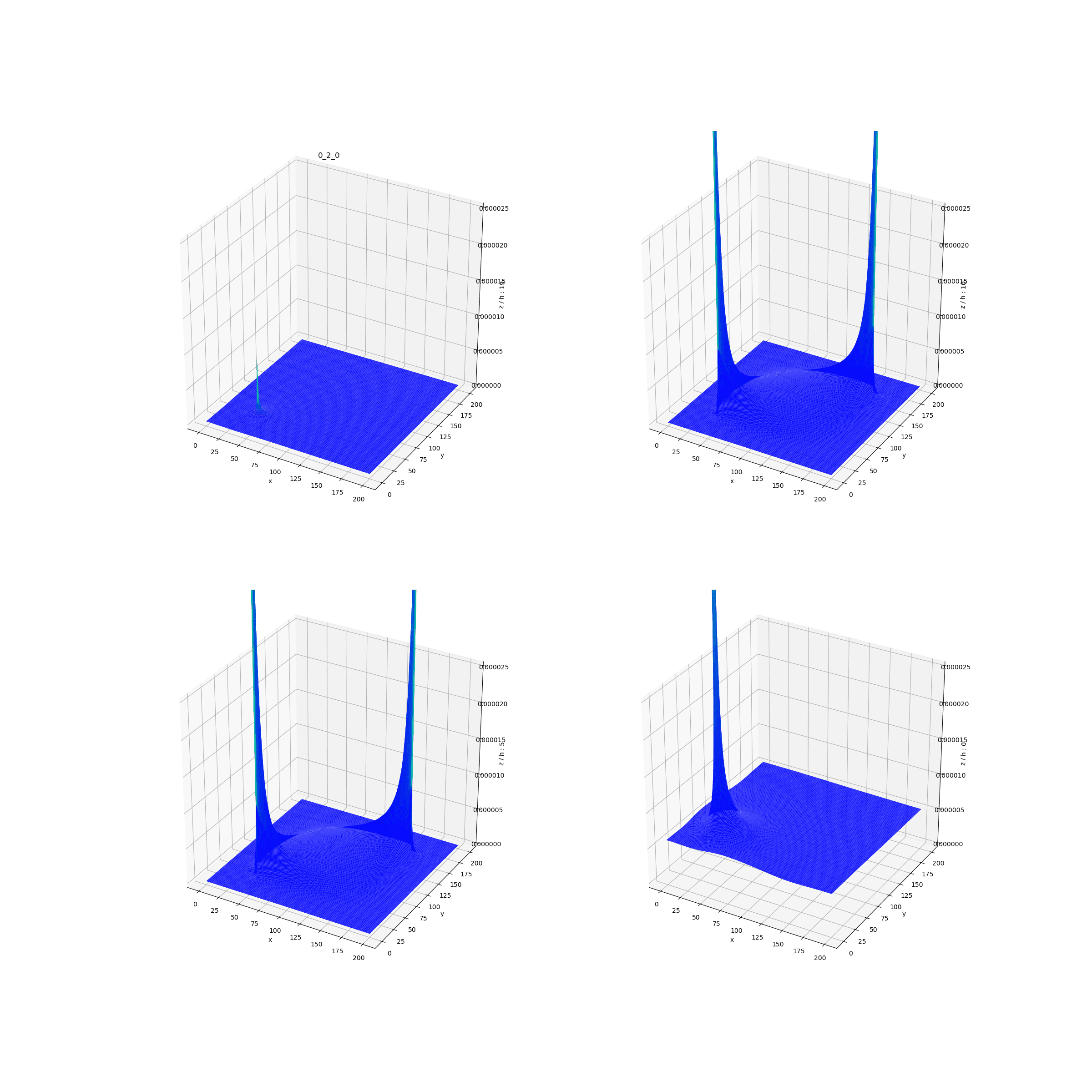}
        \caption{After $1000$ timesteps.}
    \end{subfigure}
    \begin{subfigure}{0.23\textwidth}
        \centering
        \includegraphics[width=1.0\linewidth]{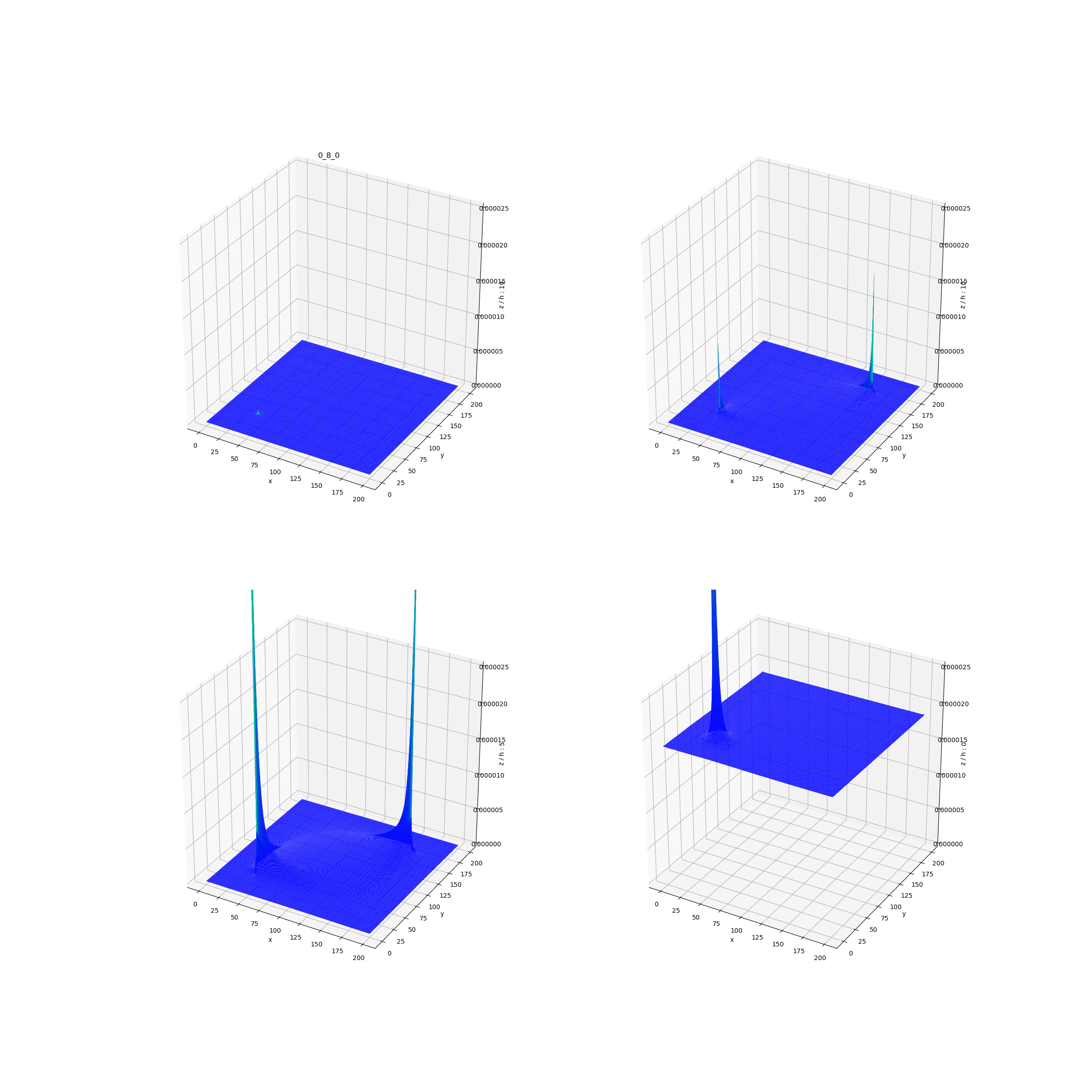}
        \caption{After $4000$ timesteps.}
    \end{subfigure}
    \begin{subfigure}{0.23\textwidth}
        \centering
        \includegraphics[width=1.0\linewidth]{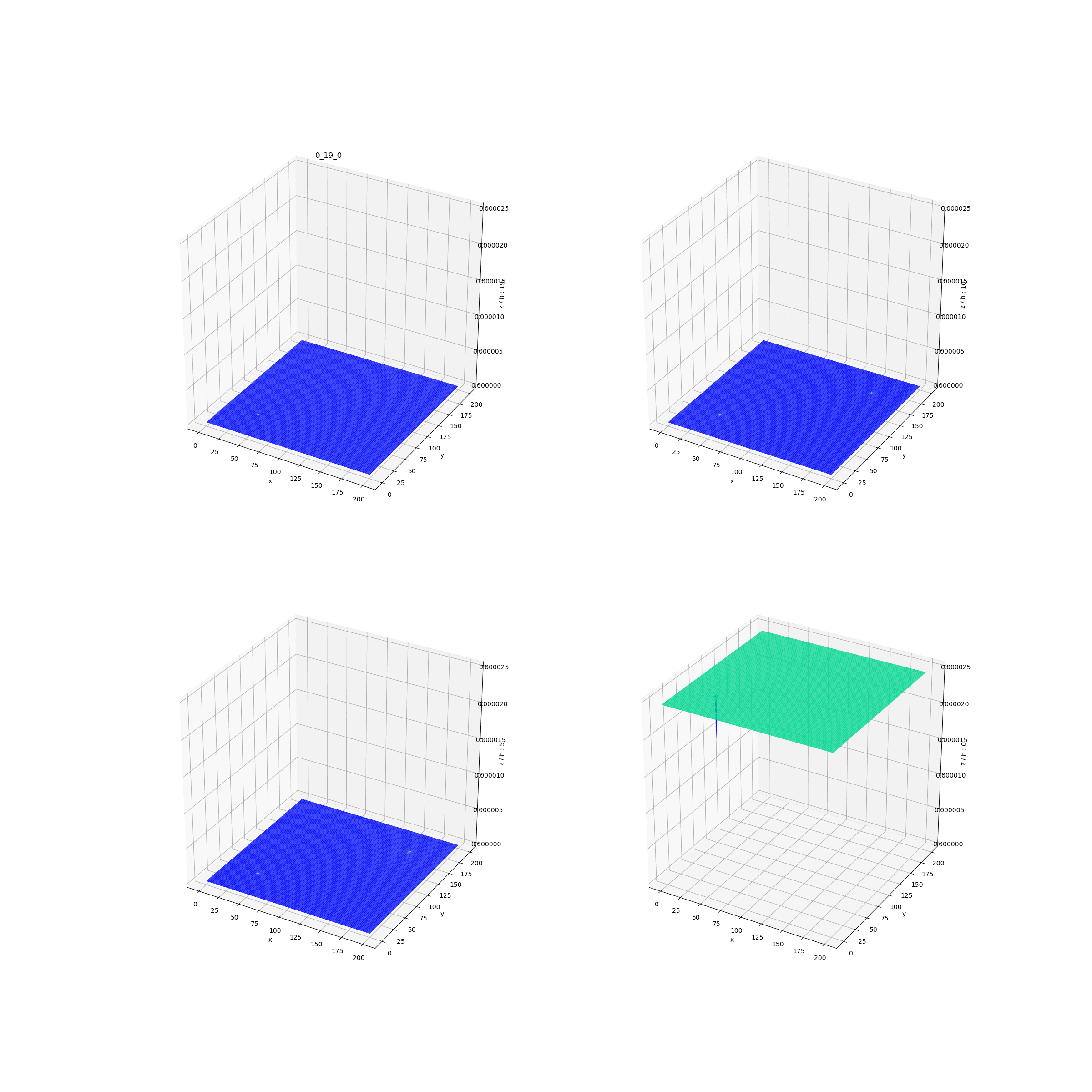}
        \caption{A few more timesteps.}
    \end{subfigure}
    \begin{subfigure}{0.27\textwidth}
        \centering
        \includegraphics[width=1.0\linewidth]{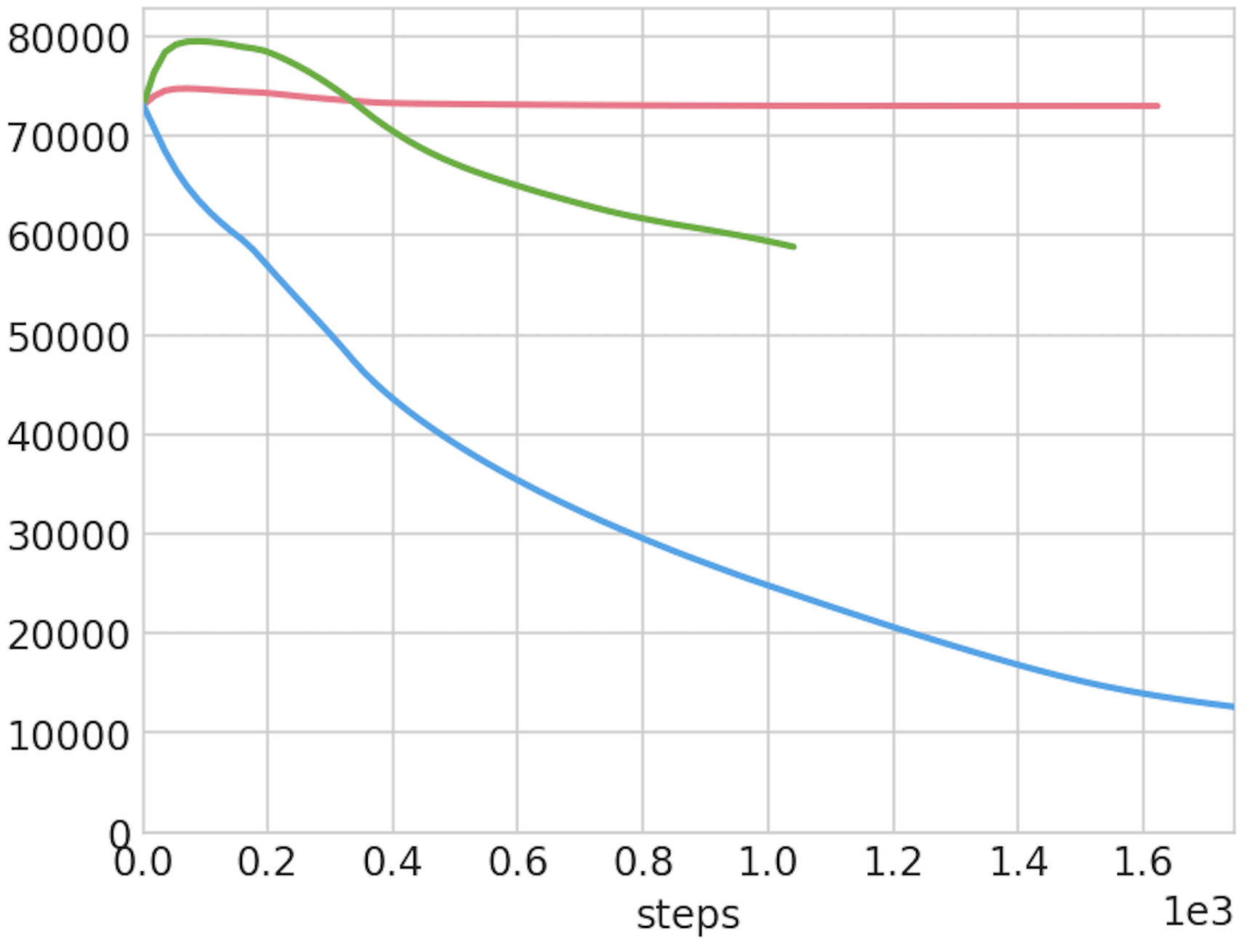}
        \caption{Exploitability of FP, FP damped and OMD.}
        \label{subfig:building_exploitability_omd}
    \end{subfigure}
    \caption{Population distribution at consecutive dates. Each plot of a subfigure is a different floor, the bottom floor is the bottom-right plot, the top floor is the top-left plot. Fig.~\ref{subfig:building_exploitability_omd}: FP (red, $\alpha=10^{-5}$), FP damped (green, $\alpha= 10^{-3}$) and OMD (blue, $\alpha=10^{-4}$).}
    \label{fig:Building}
\end{figure*}

\subsection{Building evacuation}

{\bf Environment: }
We now turn to a single-population crowd modeling problem, namely a building evacuation. This kind of problem has been the topic of several works on MFG (see e.g.~\cite{MR3392611,achdou2019mean} for a single room and~\cite{djehiche2017mean} for a multilevel building). The building consists of 20 floors, each of dimension $200 \times 200$. At each floor, two staircases are located at two opposite corners, such as the crowd has to cross the whole floor to take the next staircase. Each agent can remain in place, move in the 4 directions (up, down, right, left) as well as go up or down when on a staircase location. The initial distribution is uniform over all the floors. Each agent of the crowd wants to go downstairs as quickly as possible - as it gets a reward of $10$ at the bottom floor - while favoring social distancing:
\begin{equation*}
    r(x,a,\mu) = -\eta \log(\mu(x)) + 10 \times \mathds{1}_{floor = 0}
\end{equation*}
\begin{wrapfigure}{r}{0.25\textwidth}
    \centering
      \includegraphics[width=0.3\linewidth]{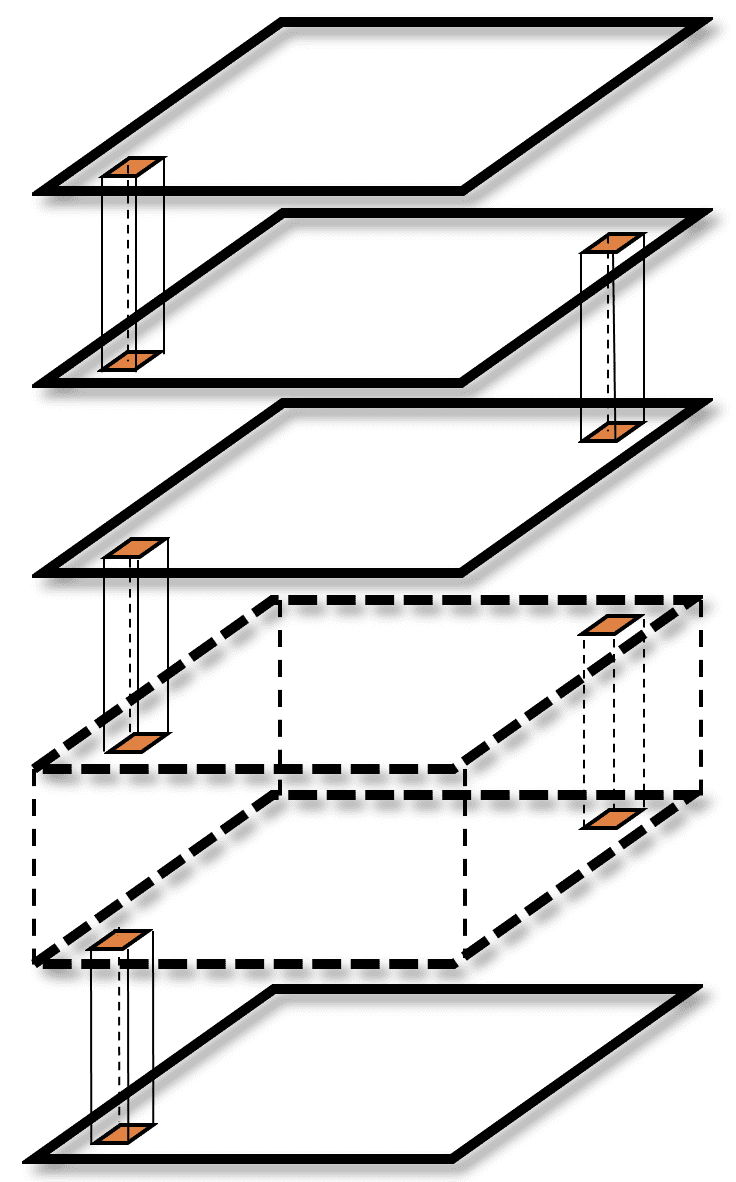}
\caption{Building environment.}
    \label{fig:Building_env}
\end{wrapfigure}
{\bf Numerical results:} We compute this problem with a horizon of $10000$, so $|\mathcal{X}| = 8^{10}$. We take $\eta = 1$. To ensure that the reward stays bounded, we clip the first part $-\eta \log(\mu(x))$ to $-40$. As expected, we observe in Fig.~\ref{fig:Building} that the agents go downstairs and do not concentrate on the shortest path but rather spread mildly. OMD converges faster than both FP and FP damped.

\subsection{Crowd motion with randomly shifted point of interest}
\label{sec:crowd-random-shift}

{\bf Environment:} We consider a second crowd modeling MFG, extending the Beach Bar problem of \cite{perrin2020fictitious} in two dimensions. The environment is a 2D torus of dimensions $1000 \times 1000$, with a point of interest initially located at the center of the square. After $200$ timesteps, the point of interest changes location, moving randomly in the direction of one of the corner. This process repeats itself $5$ times. This random location change adds common noise to the environment and increases exponentially the number of states. Considering MFG with common noise can be encompassed in our previous study by simply increasing the state space with the common noise and adding time to the reward and the transition kernel.  For every random movement, four possible directions are possible, making the total number of states $|\mathcal{X}| = 2\times 10^{8} \times \sum_{k=0}^4 4^k = 2.73\times 10^{11}$ states.
The reward is:  $r(x,a,\mu) = C \times (1 - \frac{\|bar - (i,j)\|_1}{2 \times N_{side}} ) - \log(\mu(x))$.\\
\begin{figure*}[htbp]
    \centering
    \begin{subfigure}{0.23\textwidth}
        \centering
        \includegraphics[width=1.0\linewidth]{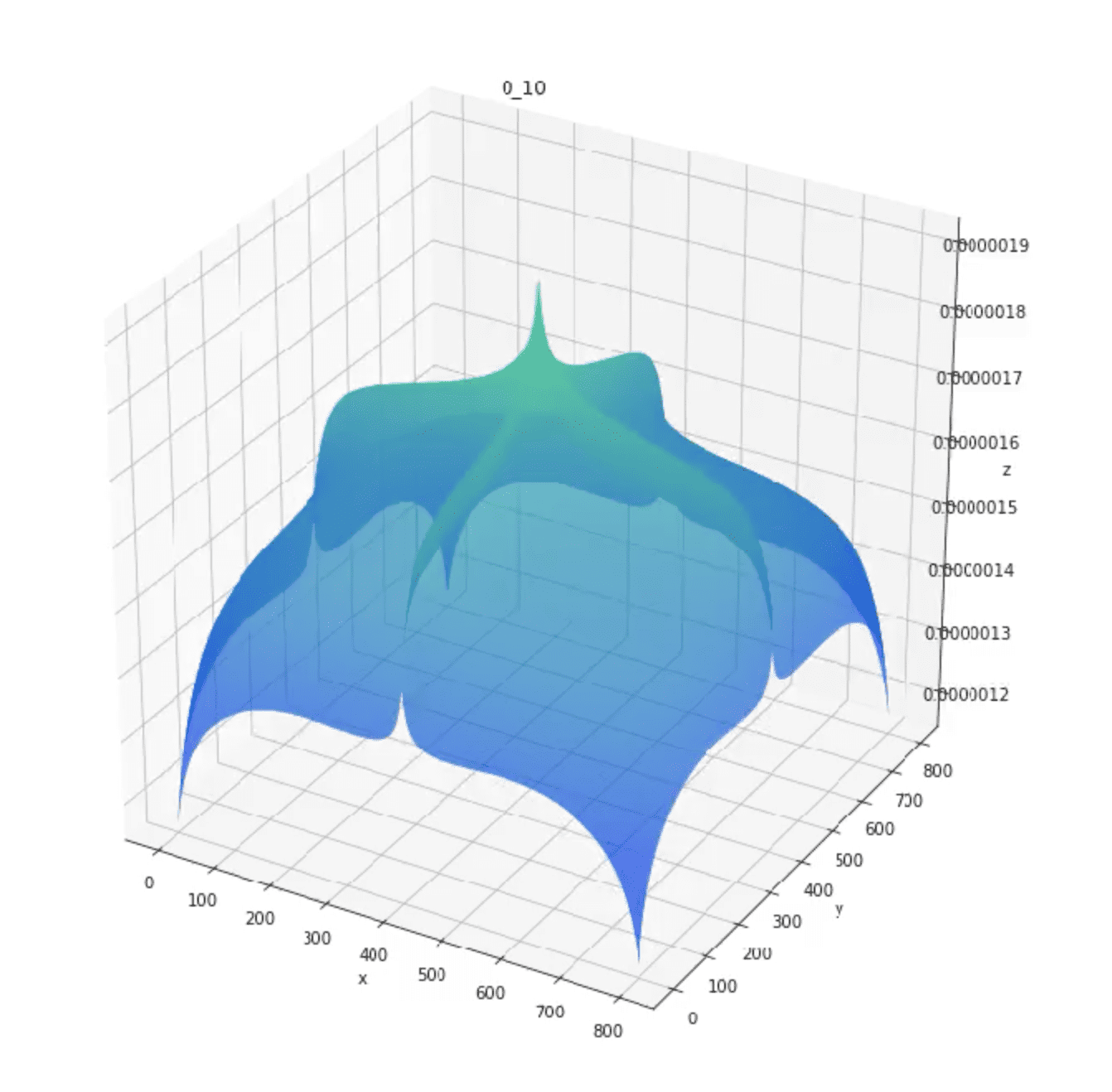}
        \caption{After 90 timesteps.}
    \end{subfigure}
    \begin{subfigure}{0.23\textwidth}
        \centering
        \includegraphics[width=1.0\linewidth]{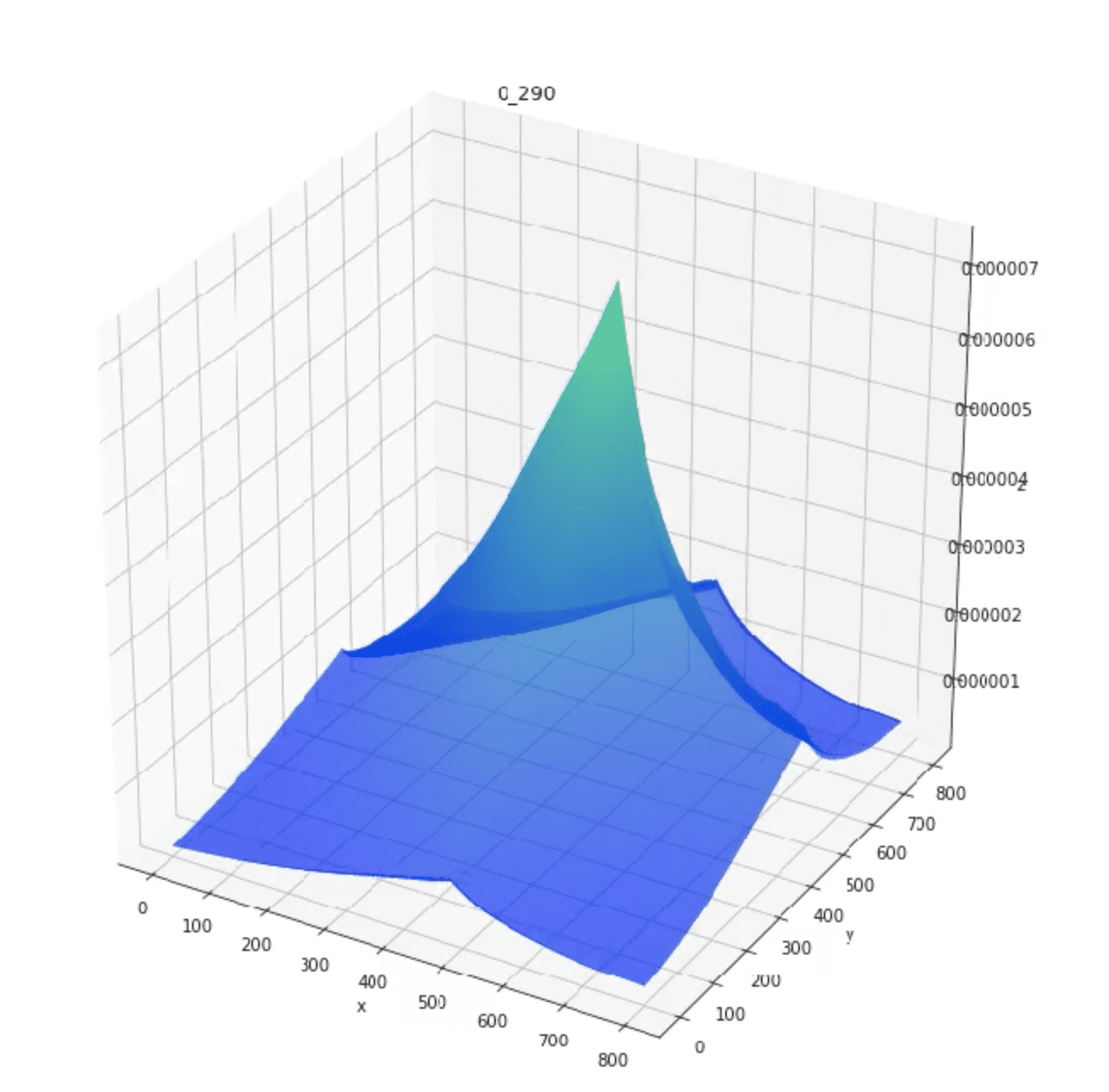}
        \caption{The crowd follows the point.}
    \end{subfigure}
    \begin{subfigure}{0.23\textwidth}
        \centering
        \includegraphics[width=1.0\linewidth]{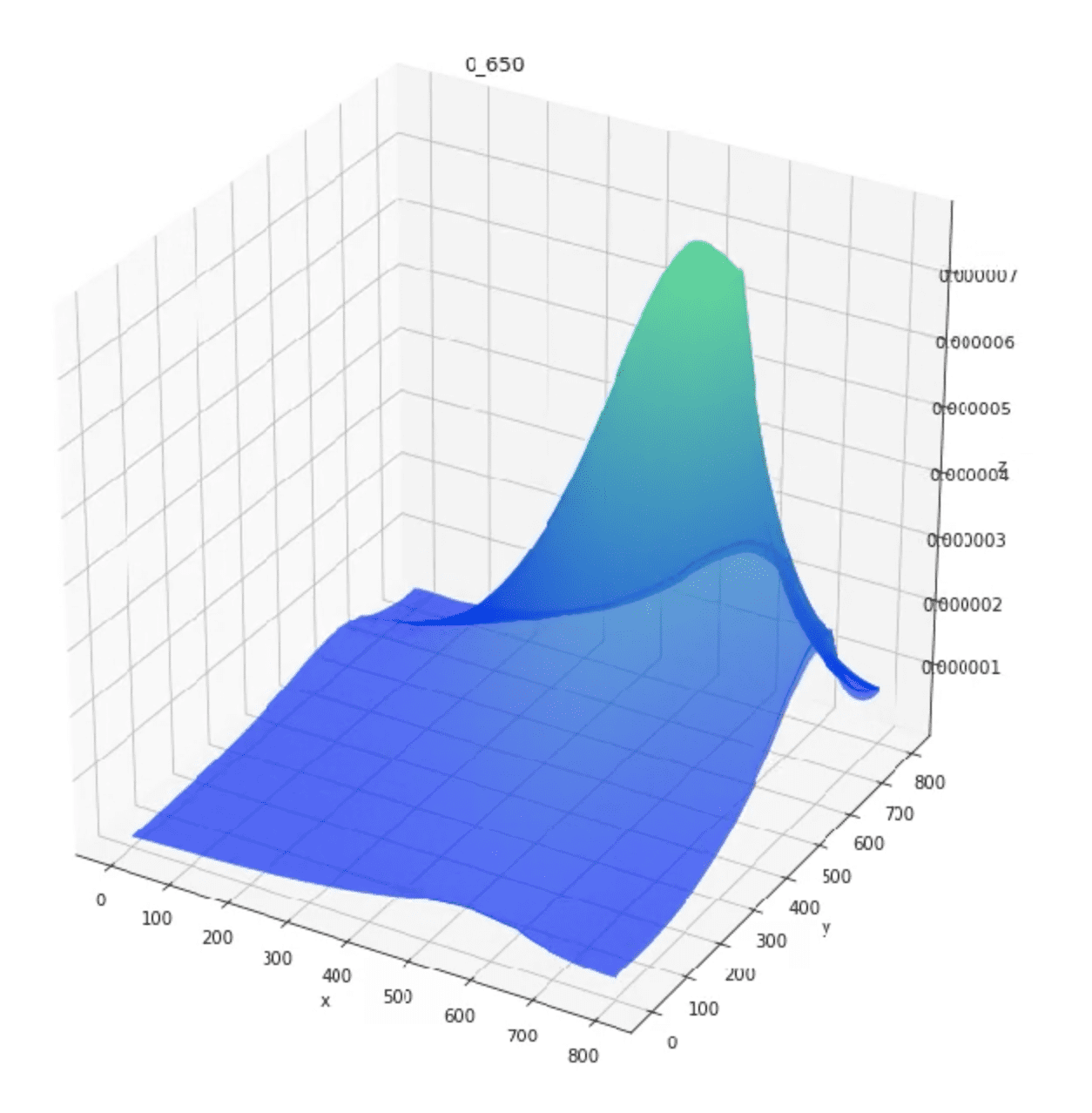}
        \caption{A few more timesteps.}
    \end{subfigure}
    \begin{subfigure}{0.27\textwidth}
        \centering
        \includegraphics[width=1.0\linewidth]{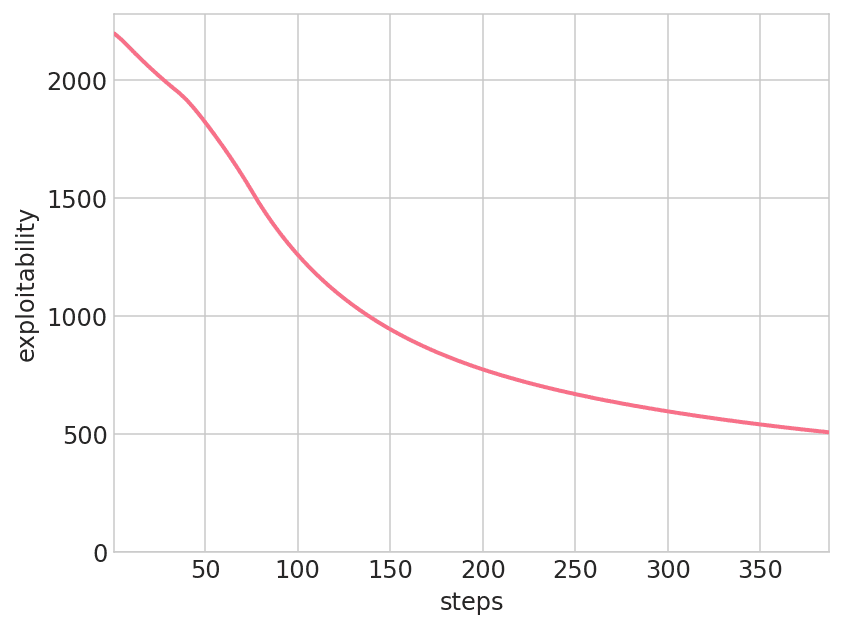}
        \caption{Exploitability of OMD.}
    \end{subfigure}
    \caption{Crowd position at different consecutive dates when the point of interest is randomly shifted to the right by a common noise.}
    \label{fig:Crowd_Common_Noise}
\end{figure*}

{\bf Numerical results:} We set $C = 10$. We observe in Fig.~\ref{fig:Crowd_Common_Noise} that the population is organizing itself with respect to the point of interest and follows it closely as it randomly moves within the dedicated square region. With common noise we get more than a trillion states, making it hard for FP to scale. More plots with a smaller state space are available in Appx.~\ref{app:expe} for a comparison of OMD and FP.
\subsection{Multi-population chasing}

{\bf Environment: } We finally look at \MPMFG{}s, where the populations are chasing each other in a cyclic manner. For the sake of clarity, we explain the reward structure with 3 populations, but more populations are considered in the experiments. With three populations, the game closely relates to the well known Hens-Foxes-Snakes outdoor game for kids. Hens are trying to catch snakes, while snakes are chasing foxes, who are willing to eat hens. It can also be interpreted as a control version of the spatially extended Rock-Paper-Scissors, where patterns of travelling waves appear under certain conditions \cite{Postlethwaite_2017}. The interplay between nontransitive interactions and biodiversity has been the subject of extensive, mostly experimental, research showing that the setting details  critically affect the emergent behavior~\citep{szolnoki2020pattern}.

To ensure $\bar r^{i,j} = - \bar r^{j,i}$ we implement MP-MFGs with the reward structure defined in Table~\ref{table:RPS} (ex. with 3 populations).

\begin{table}[htb]
\centering
\begin{tabular}{l|ccc}
   & R & P & S  \\ \hline      
 R & 0  & -1  & 1   \\ 
 P & 1  &  0  & -1  \\ 
 S & -1  & 1 &  0 
\end{tabular}
\caption{$\bar r^{i,j}$ for three-population.\label{table:RPS}}
\vspace{-5mm}
\end{table}
The reward of population $i$ is monotone (\textit{cf.} Appx.~\ref{appx:subsec_proof_monotone_multi_reward}) and follows the definition \eqref{eq:reward_multi}:
$
    r^i(x, a, \mu^1, \dots, \mu^N) = -\log(\mu^i(x)) + \sum_{j\neq i}\mu^j(x)\bar r^{i,j}(x).
$
The distributions are initialized either randomly or in different corners. The number of agents of each population is fixed, but the reward encourages the agent to chase the population that it dominates. For example, if an agent is Rock, the second term of the reward is proportional to the amount of Scissors agents $\mu^{S}$ where the Rock agent is located, and inversely proportional to $\mu^{P}$, the proportion of Paper agents, making the Rock agent to flee from places populated by Paper agents.

{\bf Numerical results: } We present a four-population example, each is initially located at a corner of the environment. We observe that the populations are chasing each other in a cyclic fashion. Fig.~\ref{fig:multipop_main} highlights that OMD algorithm outperforms FP in terms of exploitability minimization (full comparison with different values of $\alpha$ in Appx.~\ref{appx:sec_multipop_expes}). It demonstrates the robustness of the OMD algorithm within the different topologies considered. Topologies of the environment are a torus, a basic square or the `donut' topology (an environment where the agent gets a negative reward if it goes inside a large zone at the center of the square).

\begin{figure*}[htbp]
    \centering
    \begin{subfigure}{0.23\textwidth}
        \centering
        \includegraphics[width=1.0\linewidth]{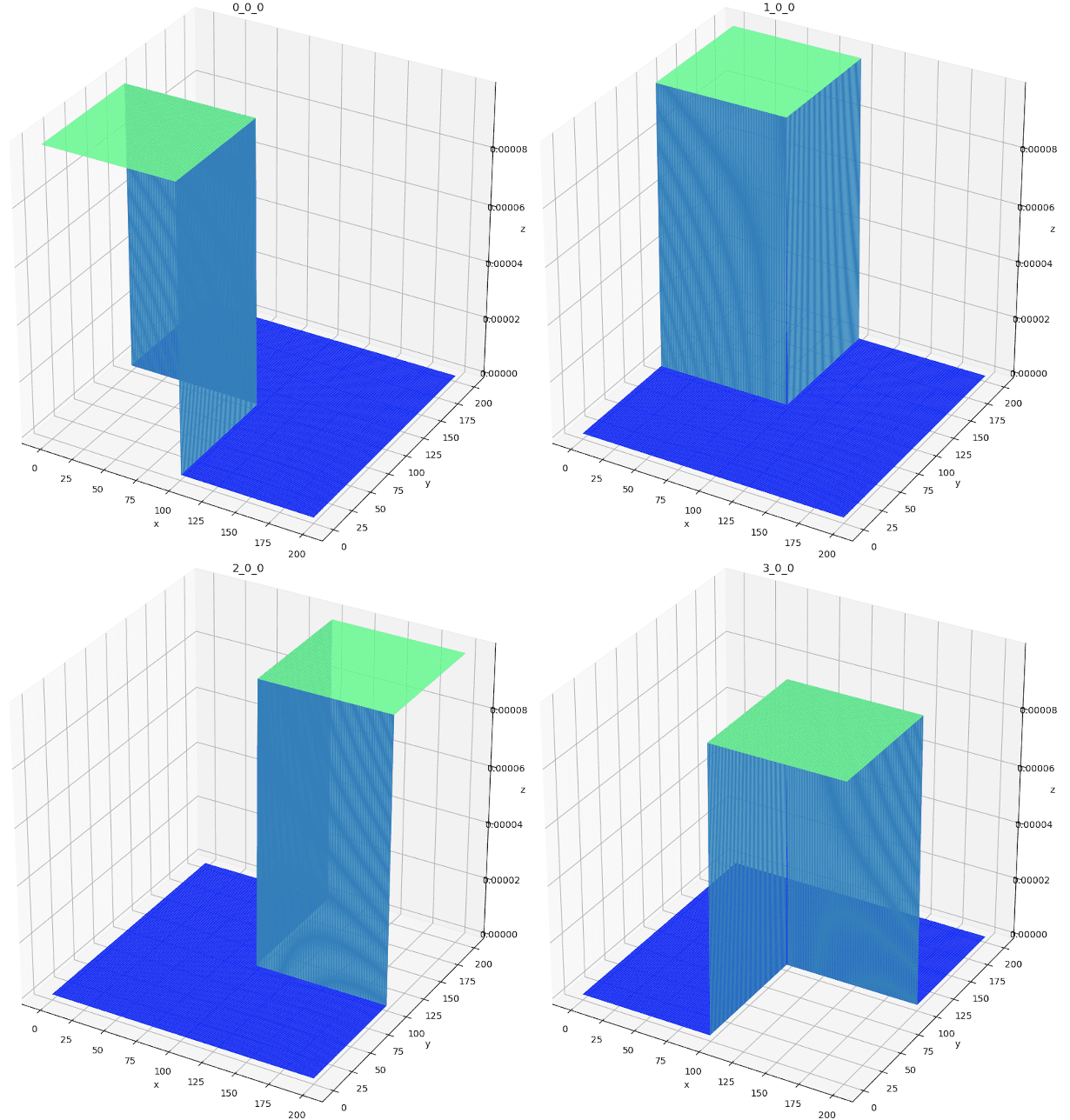}
        \caption{Initial distributions.}
    \end{subfigure}
    \begin{subfigure}{0.23\textwidth}
        \centering
        \includegraphics[width=1.0\linewidth]{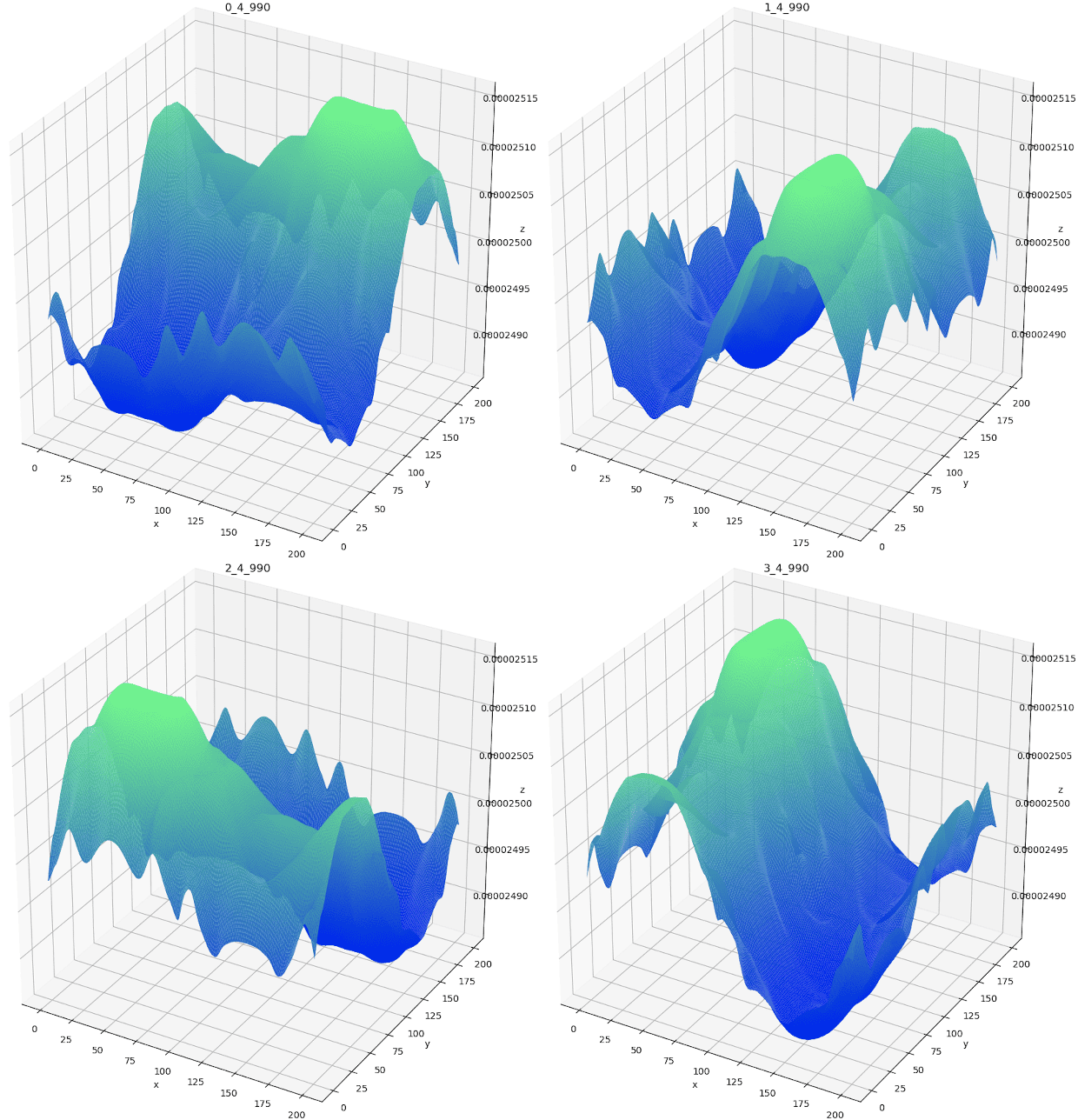}
        \caption{After a few timesteps.}
    \end{subfigure}
    \begin{subfigure}{0.23\textwidth}
        \centering
        \includegraphics[width=1.0\linewidth]{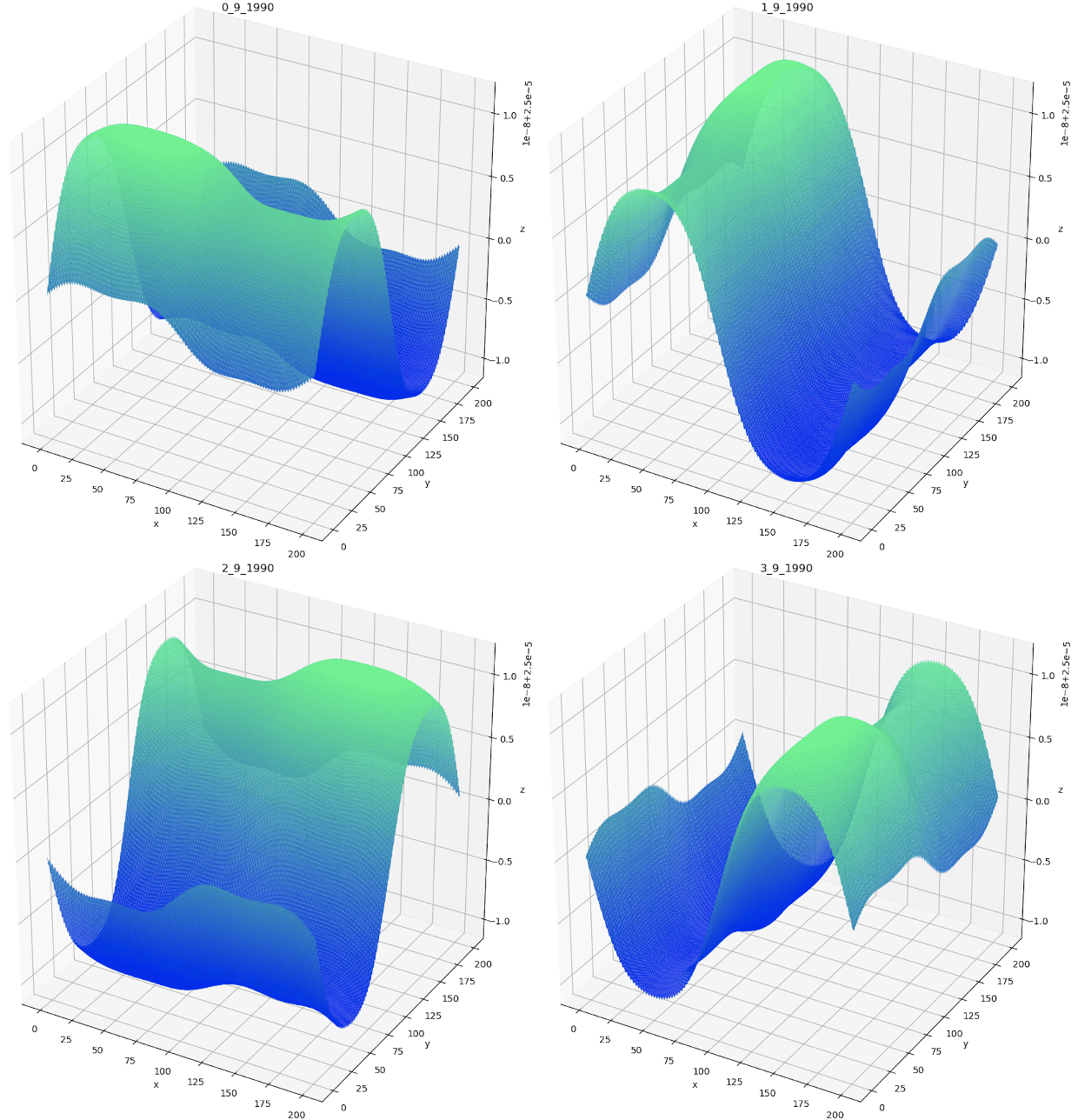}
        \caption{A few more timesteps.}
    \end{subfigure}
    \begin{subfigure}{0.27\textwidth}
        \centering
        \includegraphics[width=1.0\linewidth]{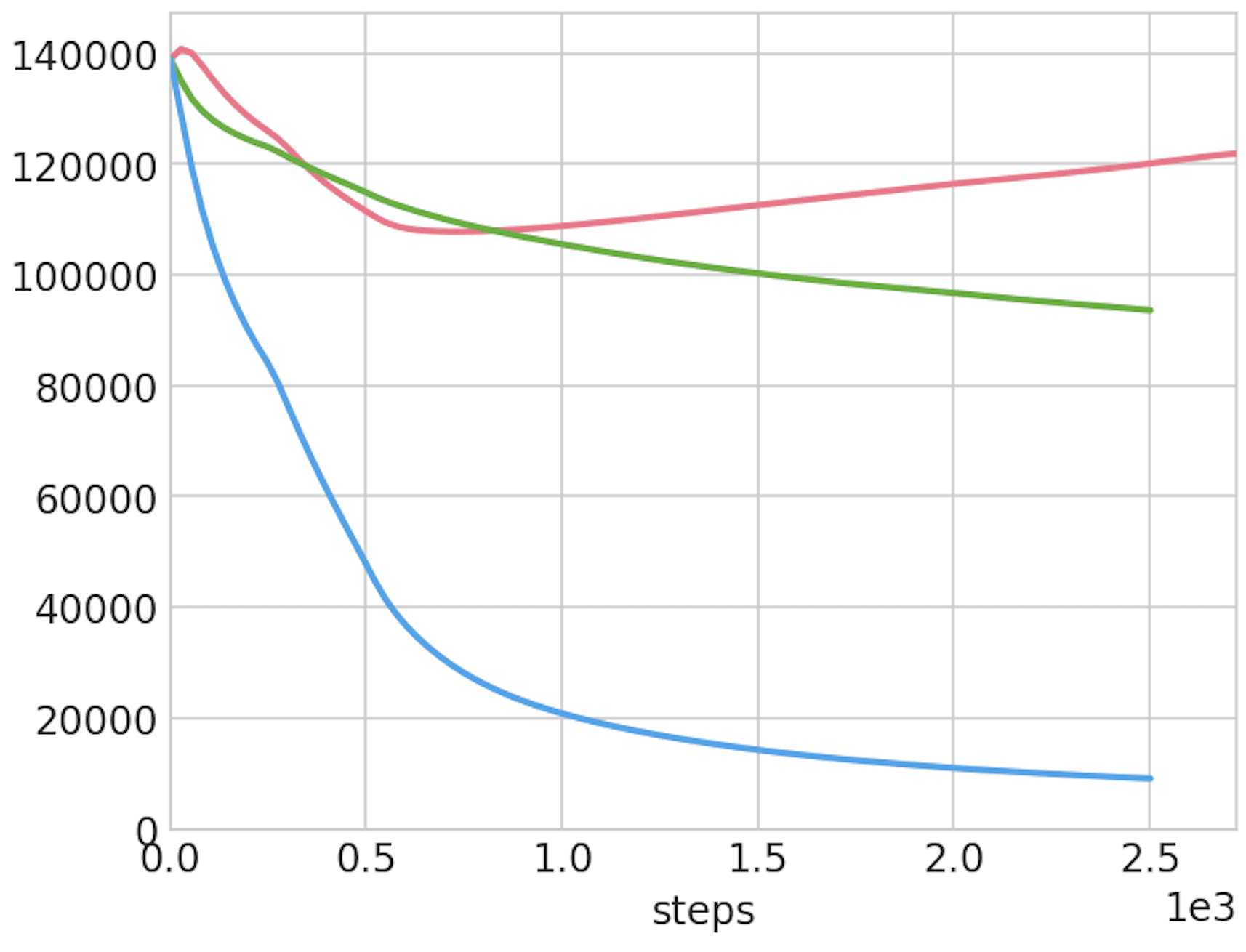}
        \caption{Exploitability.}
        \label{subfig:4pop_exploitability_omd}
    \end{subfigure}
    \caption{4-population chasing. Fig~\ref{subfig:4pop_exploitability_omd}: FP (red, $\alpha=10^{-3}$), FP damped (green, $\alpha = 10^{-5}$) and OMD (blue, $\alpha = 10^{-5}$).}
    \label{fig:multipop_main}
\end{figure*}

\section{Related work}

OMD dynamics have been studied extensively within the field of multi-agent games~\cite{Cesa06,10.5555/1296179}. Leveraging the well known advantageous regret properties of such dynamics~\cite{srebro2011universality}, one can prove strong time-average convergence results both in zero-sum games (and network variants thereof)~\cite{freund1999adaptive,cai2016zero} as well as in smooth-games~\cite{Roughgarden09}.
Recently, there has been explicit focus on understanding their day-to-day behavior which has been shown to be non-equilibrating even in standard bilinear zero-sum games~\cite{piliouras2014optimization,mertikopoulos2017cycles}.
 Moreover, even in simple games the behavior of such dynamics can become formally chaotic~\cite{Sato02042002,palaiopanos2017multiplicative,chotibut2019route}.
 Nevertheless, sufficient conditions have been established under which converge to \Nash is guaranteed even in the sense of the day-to-day behavior~\cite{zhou2017mirror,bravo2018bandit}. 
  We find sufficient conditions for convergence in the more demanding setting of  \MPMFG. 

\MPMFG{}s have been introduced in~\cite{MR2346927} and studied from a PDE viewpoint in~\cite{MR3127148,MR3333058,MR3660463,MR3888965}. To the best of our knowledge, our work is the first one to provide a monotonicity condition for \MPMFG and to provide a provably converging algorithm.

Related to the question of learning in MFGs,~\cite{yin2010learning} studied a MF oscillator game, while~\cite{cardaliaguet2017learning} initiated the study of Fictitious Play in MFGs, which has been further studied in~\cite{hadikhanloo2019finite}. Recently, these ideas have been combined with Reinforcement Learning by~\citet{elie2020convergence,perrin2020fictitious}. These methods allow solving MFGs under a monotonicity assumption, which is at the same time easier to check and less restrictive than the ones used to ensure convergence for fixed point iterations~\citep{guo2019learning,anahtarci2020q} or single-loop fictitious play iterations~\citep{angiuli2020unified,xie2020provable}. In our work, we also prove convergence under such a weak monotonicity condition, which enables us to cover a large class of MFGs. Furthermore, we consider time-dependent problems (as e.g. in~\cite{mishra2020model}) instead of stationary equilibria. Mirror Descent for MFGs has been introduced in~\cite{hadikhanloo2017learning} for first-order, single-population MFG. Our results cover second order, \MPMFG. Traditional numerical methods for solving MFGs typically rely on a finite difference scheme 
introduced in~\cite{MR2679575}. This approach can be extended to solve \MPMFG, see~\cite{MR3597009}. However, to the best of our knowledge, there is no general convergence guarantees, nor  has it been tested on examples with as many states as we consider. More recently, several numerical methods to solve MFGs based on machine learning tools have been proposed using either an analytical viewpoint~\cite{al2018solving,carmona2019convergence-I,ruthotto2020machine,cao2020connecting,lin2020apac} or a stochastic viewpoint~\cite{fouque2019deep,carmona2019convergence-II,germain2019numerical}. To the best of our knowledge, these algorithms have not been proved to converge and seem applicable only under rather stringent conditions (on the structure or the regularity of the problem) and do not seem to be directly applicable to complex geometries due to boundary conditions. Last, the question of learning with multiple infinite populations of agents has also been studied recently in~\cite{subramanian2018reinforcement}. The authors consider several groups where the agents cooperate among each group, which differs from our setting where all the agents compete. 

\section{Conclusion}

We proposed Online Mirror Descent for \MPMFG{}s. We have proved that under appropriate monotonicity assumptions, OMD converges to a \Nash. Moreover, we considered multiple experimental benchmarks, some with hundreds of billions states, and compared extensively OMD to FP. OMD scales up remarkably well and consistently converges significantly faster than FP.  An interesting direction of future work would be to study  the rate of convergence of OMD. We have shown $O(1/t)$ rate for FP in \MPMFG  but our technique does not extend to OMD.
 Empirically, we envision to extend this approach to a model-free setting with function approximation and address even larger problems.

\clearpage

\section*{Acknowledgments}

Georgios Piliouras gratefully acknowledges grant PIE-SGP-AI-2018-01, NRF2019-NRF-ANR095 ALIAS grant and NRF 2018 Fellowship NRF-NRFF2018-07. Mathieu Lauri{\`e}re gratefully acknowledges the support of NSF grant DMS-1716673 and ARO grant  W911NF-17-1-0578. We would like to thanks Mark Rowland for his review of the manuscript and helpful suggestions.

\bibliography{example_paper}
\bibliographystyle{icml2021}

\clearpage
\onecolumn

\appendix

\section{Separability $+$ Monotonicity Imply Weak Monotonicity}
\label{Separable_Monotonicity_imply_WMonotonicity}

\begin{proof}[Proof of Lemma~\ref{lem:sep-mon-wmon}]
Let us assume that the reward is separable $r^i(x^i, a^i, \mu) = \bar r^i(x^i, a^i) + \tilde r^i(x^i, \mu)$ and that it follows the monotonicity condition: $\forall \mu \neq \mu', \;\sum \limits_i \sum \limits_{x \in \mathcal{X}} (\mu^i(x^i)-{\mu'}^i(x^i))(\tilde r^i(x^i, \mu)-\tilde r^i(x^i, \mu'))\leq 0$. Then, we have:
\begin{align*}
    &\sum \limits_{i=1}^{N_p}\big[J^i(\pi, \mu^{\pi}) - J^i(\pi', \mu^{\pi}) - J^i(\pi, \mu^{\pi'}) +  J^i(\pi', \mu^{\pi'})\big]\\
    & = \sum \limits_{i=1}^{N_p}\sum \limits_{n=0}^N \sum\limits_{(x^i, a^i) \in \mathcal{X}\times \mathcal{A}} \big[ \mu^{i, \pi^i}_n(x^i)\pi^i_n(a^i|x^i)r^i(x^i, a^i, \mu^\pi_{n}) - \mu^{i, {\pi'}^i}_n(x^i){\pi'}^i_n(a^i|x^i)r^i(x^i, a^i, \mu^\pi_{n})\\
    &\qquad\qquad\qquad\qquad\qquad\qquad- \mu^{i, \pi^i}_n(x^i)\pi^i_n(a^i|x^i)r^i(x^i, a^i, \mu^{\pi'}_{n}) +\mu^{i, {\pi'}^i}_n(x^i){\pi'}^i_n(a^i|x^i)r^i(x^i, a^i, \mu^{\pi'}_{n})\big]\\
    &=\sum \limits_{i=1}^{N_p}\sum \limits_{n=0}^N \sum\limits_{(x^i, a^i) \in \mathcal{X}\times \mathcal{A}} \big( \mu^{i, \pi^i}_n(x^i)\pi^i_n(a^i|x^i)-\mu^{i, {\pi'}^i}_n(x^i){\pi'}^i_n(a^i|x^i)\big)\big(r^i(x^i, a^i, \mu^\pi_{n}) - r^i(x^i, a^i, \mu^{\pi'}_{n})\big)
    \\
    &=\sum \limits_{i=1}^{N_p}\sum \limits_{n=0}^N \sum\limits_{(x^i, a^i) \in \mathcal{X}\times \mathcal{A}} \big( \mu^{i, \pi^i}_n(x^i)\pi^i_n(a^i|x^i)-\mu^{i, {\pi'}^i}_n(x^i){\pi'}^i_n(a^i|x^i)\big)\big(\tilde r^i(x^i, \mu^\pi_{n}) - \tilde r^i(x^i, \mu^{\pi'}_{n})\big)
    \\
    &=\sum \limits_{i=1}^{N_p}\sum \limits_{n=0}^N \sum\limits_{x^i \in \mathcal{X}} \big( \mu^{i, \pi^i}_n(x^i)-\mu^{i, {\pi'}^i}_n(x^i)\big)\big(\tilde r^i(x^i, \mu^\pi_{n}) - \tilde r^i(x^i, \mu^{\pi'}_{n})\big)\leq 0.
\end{align*}

With a similar proof, we obtain the corresponding property with strict inequality.
\end{proof}

\section{Multi-Population Reward}
\label{multipop_reward}
Let us suppose:
\begin{align*}
    &r^i(x^i, a^i, \mu) 
    = \bar r^i(x^i, a^i) + \underbrace{\hat r^i(x^i, \mu^i) + \sum_{j\neq i}\mu^j(x^i)\hat r^{i,j}(x^i)}_{=\tilde r^i(x^i,\mu)}
\label{eq:reward_multi}
\end{align*}
With $\forall x\in\mathcal{X}, \hat{r}^{i,j}(x) = - \hat{r}^{j, i}(x)$ and if $\forall \mu \neq \mu', \forall i, \sum \limits_{x \in \mathcal{X}} \Big(\mu^i(x^i)-{\mu'}^i(x^i)\Big)\Big(\hat r^i(x^i, \mu^i)-\hat r^i(x^i, {\mu'}^i)\Big)\leq 0$.

\begin{align*}
    &\sum \limits_i \sum \limits_{x \in \mathcal{X}} (\mu^i(x^i)-{\mu'}^i(x^i))(\tilde r^i(x^i, \mu)-\tilde r^i(x^i, \mu'))\\
    &=\sum \limits_i \sum \limits_{x \in \mathcal{X}} (\mu^i(x^i)-{\mu'}^i(x^i))(\hat r^i(x^i, \mu^i) + \sum_{j\neq i}\mu^j(x^i)\hat r^{i,j}(x^i)-\hat r^i(x^i, {\mu'}^i) - \sum_{j\neq i}{\mu'}^j(x^i)\hat r^{i,j}(x^i))\\
    &=\sum \limits_i \underbrace{\sum \limits_{x \in \mathcal{X}} (\mu^i(x^i)-{\mu'}^i(x^i))(\hat r^i(x^i, \mu^i)-\hat r^i(x^i, {\mu'}^i))}_{\leq 0} + \underbrace{\sum \limits_i \sum_{j\neq i} \sum \limits_{x \in \mathcal{X}} (\mu^i(x^i)-{\mu'}^i(x^i))(\mu^j(x^i)-{\mu'}^j(x^i))\hat r^{i,j}(x^i)}_{=0 \textrm{ since } \hat{r}^{i,j}(x) = - \hat{r}^{j, i}(x)}\\
    &\leq 0
\end{align*}

\newpage
\section{Fictitious Play}
\label{fp_proof}
In this section, we prove that under the weak monotonicity condition, the Fictitious Play process converges to a \Nash.

First, we prove the following property, which stems from the weak monotonicity.

\begin{property}
\label{prop:mono-r-dmu}
Let $f$ be a smooth enough function and let assume that the ODE $\dot \rho = f(\rho)$ (with $\dot \rho = \frac{d}{dt} \rho$) has a solution $(\rho^t)_{t \ge 0} = (\rho^t_n(x))_{t \ge 0, x \in \mathcal{X}}$.
If the game is weakly monotone, then: 
$$\sum \limits_{i=1}^{N_p} \sum \limits_{x^i, a^i \in \mathcal{X}\times \mathcal{A}} \langle \nabla_{\rho} r^i(x^i, a^i,\rho), \dot \rho \rangle \dot \rho^i(x^i, a^i) \leq 0.
$$
\end{property}

\begin{proof}
The monotonicity condition implies that, for all $\tau \geq 0$, we have:
$$\sum \limits_{i=1}^{N_p} \sum \limits_{x^i, a^i \in \mathcal{X}\times \mathcal{A}} (\rho^i_t(x^i, a^i) - \rho^i_{t+\tau}(x^i, a^i))(r^i(x^i, a^i,\rho_t) - r^i(x^i, a^i,\rho_{t+\tau}))\leq 0.
$$
Thus:
$$\sum \limits_{i=1}^{N_p} \sum \limits_{x^i, a^i \in \mathcal{X}\times \mathcal{A}} \frac{\rho^i_t(x^i, a^i) - \rho^i_{t+\tau}(x^i, a^i)}{\tau}\frac{r^i(x^i, a^i,\rho_t) - r^i(x^i, a^i,\rho_{t+\tau})}{\tau}\leq 0.
$$
The result follows when $\tau \rightarrow 0$.
\end{proof}
In the space of distributions over state actions, the Fictitious Play process can be expressed as follows. First, we start with a distribution $\rho^i_{n,t}$ following the balance equation on the state action distributions:
$$\sum \limits_{{a'}^i\in\mathcal{A}} \rho^i_{n-1,t}({x'}^i, {a'}^i) = \sum \limits_{x^i, a^i\in \mathcal{X}\times\mathcal{A}}p({x'}^i|x^i, a^i) \rho^i_{n,t}(x^i, a^i).$$

And for $t<1$, the policy $\pi^i_{n, t}(a^i|x^i) = \frac{\rho^i_{n,t}(x^i, a^i)}{\sum \limits_{a^i\in\mathcal{A}} \rho^i_{n,t}(x^i, a^i)}$ is the uniform policy whenever $\sum \limits_{a^i\in\mathcal{A}} \rho^i_{n,t}(x^i, a^i)>0$.

A best response state action distribution to $\rho$ is written $\rho^{i,br}_{n,t}(x^i, a^i)$ (which will be assumed to be equal to $\rho_t$ for $t<1$) and finally the FP process on the state action distribution is written as for all $t\geq 1$:

$$\rho^i_{n,t}(x^i, a^i) = \frac{1}{t}\int \limits_{0}^{t} \rho^{i,br}_{n,s}(x^i, a^i) ds.$$

The exploitability can then be written as:
$$\phi(t) = \max \limits_{\rho} \Big[\sum \limits_{i=1}^{N_p} \sum_{n=0}^N \sum \limits_{x^i, a^i \in \mathcal{X}\times \mathcal{A}} \rho^{i}_{n}(x^i, a^i) r^i(x^i, a^i,\rho_{n,t})\Big]-\Big[\sum \limits_{i=1}^{N_p} \sum_{n=0}^N \sum \limits_{x^i, a^i \in \mathcal{X}\times \mathcal{A}} \rho^{i}_{n,t}(x^i, a^i) r^i(x^i, a^i,\rho_{n,t})\Big]$$

\begin{property}
    We have that $\frac{d}{dt}\rho^i_{n,t}(x^i, a^i) = \frac{1}{t}\Big[ \rho^{i,br}_{n,s}(x^i, a^i) - \rho^i_{n,t}(x^i, a^i) \Big]$ by taking the derivative of $\rho^i_{n,t}(x^i, a^i) = \frac{1}{t}\int \limits_{0}^{t} \rho^{i,br}_{n,s}(x^i, a^i) ds$ on both sides.
\end{property}

Finally, we take the derivative of the exploitability and get:

\begin{align*}
    \frac{d}{dt} \phi(t) &= \frac{d}{dt}\max \limits_{\rho} \Big[\sum \limits_{i=1}^{N_p} \sum_{n=0}^N \sum \limits_{x^i, a^i \in \mathcal{X}\times \mathcal{A}} \rho^{i}_{n}(x^i, a^i) r^i(x^i, a^i,\rho_{n,t})\Big]-\frac{d}{dt}\Big[\sum \limits_{i=1}^{N_p} \sum_{n=0}^N \sum \limits_{x^i, a^i \in \mathcal{X}\times \mathcal{A}} \rho^{i}_{n,t}(x^i, a^i) r^i(x^i, a^i,\rho_{n,t})\Big]\\
    &=\Big[\sum \limits_{i=1}^{N_p} \sum_{n=0}^N \sum \limits_{x^i, a^i \in \mathcal{X}\times \mathcal{A}} \rho^{i,br}_{n,t}(x^i, a^i) \frac{d}{dt} \Big(r^i(x^i, a^i,\rho_{n,t})\Big)\Big]\\
    &\qquad\qquad-\Big[\sum \limits_{i=1}^{N_p} \sum_{n=0}^N \sum \limits_{x^i, a^i \in \mathcal{X}\times \mathcal{A}} \Big(\rho^{i}_{n,t}(x^i, a^i) \frac{d}{dt} \Big(r^i(x^i, a^i,\rho_{n,t})\Big) + r^i(x^i, a^i,\rho_{n,t}) \frac{d}{dt} \Big(\rho^{i}_{n,t}(x^i, a^i)\Big)\Big)\Big]\\
    &=\Big[\sum \limits_{i=1}^{N_p} \sum_{n=0}^N \sum \limits_{x^i, a^i \in \mathcal{X}\times \mathcal{A}} \underbrace{[\rho^{i,br}_{n,t}(x^i, a^i)-\rho^{i}_{n,t}(x^i, a^i)]}_{=t\frac{d}{dt} \Big(\rho^{i}_{n,t}(x^i, a^i)\Big)} \underbrace{\frac{d}{dt} \Big(r^i(x^i, a^i,\rho_{n,t})\Big)}_{=\langle \nabla_{\rho} r^i(x^i, a^i,\rho_{n,t}), \dot \rho_{n,t} \rangle}\Big]\\
    &\qquad\qquad\qquad\qquad-\Big[\sum \limits_{i=1}^{N_p} \sum_{n=0}^N \sum \limits_{x^i, a^i \in \mathcal{X}\times \mathcal{A}} r^i(x^i, a^i,\rho_{n,t}) \underbrace{\frac{d}{dt} \Big(\rho^{i}_{n,t}(x^i, a^i)\Big)}_{=\frac{1}{t}\Big[ \rho^{i,br}_{n,t}(x^i, a^i) - \rho^i_{n,t}(x^i, a^i) \Big]}\Big]\\
    &=t \sum \limits_{i=1}^{N_p} \sum_{n=0}^N \sum \limits_{x^i, a^i \in \mathcal{X}\times \mathcal{A}} \underbrace{\Big[\frac{d}{dt} \Big(\rho^{i}_{n,t}(x^i, a^i)\Big) \langle \nabla_{\rho} r^i(x^i, a^i,\rho_{n,t}), \dot \rho_{n,t} \rangle \Big]}_{\leq 0}\\
    &\qquad\qquad\qquad\qquad-\frac{1}{t}\underbrace{\Big[\sum \limits_{i=1}^{N_p} \sum_{n=0}^N \sum \limits_{x^i, a^i \in \mathcal{X}\times \mathcal{A}}\Big[ \rho^{i,br}_{n,t}(x^i, a^i) - \rho^i_{n,t}(x^i, a^i) \Big]r^i(x^i, a^i,\rho_{n,t})\Big]}_{=\phi(t)}\\
    &\leq -\frac{1}{t} \phi(t).
\end{align*}

\newpage 
\section{Online Mirror Descent Dynamics}
\label{sec:app-OMD}

\begin{proof}[Proof of Lemma~ \ref{Lemma_Similarity}]

The Continuous Time Online Mirror Descent (CTOMD) algorithm is defined as: for all $t > 0, i\in \{1,\dots,N_p\}, n\in \{0,\dots,N\}$,
\begin{eqnarray*}
    &y^i_{n,t}(x^i,a^i) = \int \limits_{0}^{t} Q^{i,\pi^i_s, \mu^{\pi_s}}_n(x^i,a^i) ds,
    \\
    &\pi^i_{n,t}(.|x^i) = \Gamma(y^i_{n,t}(x^i,.)).
\end{eqnarray*}

\begin{align*}
    \frac{d}{dt}H(y_t) &= \frac{d}{dt} \sum_{i=1}^{N_p} \sum \limits_{n=0}^N \sum \limits_{x^i \in \mathcal{X}} \mu^{i,\pi^*}_n(x^i)\Big[h^*(y^i_{n,t}(x^i,.)) - h^*(y^{i,*}(x^i,.)) - \langle \pi^{i,*}_{n,t},y^i_{n,t}(x^i,.)-y^{i,*}_{n,t}(x^i,.) \rangle \Big]\\
    &=\sum_{i=1}^{N_p} \sum \limits_{n=0}^N \sum \limits_{x^i \in \mathcal{X}} \mu^{i,\pi^*}_n(x^i)\frac{d}{dt} \Big[h^*(y^i_{n,t}(x^i,.)) - h^*(y^{i,*}(x^i,.)) - \langle \pi^{i,*}_{n,t},y^i_{n,t}(x^i,.)-y^{i,*}_{n,t}(x^i,.) \rangle \Big]\\
    &=\sum_{i=1}^{N_p} \sum \limits_{n=0}^N \sum \limits_{x^i \in \mathcal{X}} \mu^{i,\pi^*}_n(x^i) \Big[\frac{d}{dt}h^*(y^i_{n,t}(x^i,.)) - \langle \pi^{i,*}_{n,t},\frac{d}{dt}y^i_{n,t}(x^i,.) \rangle \Big]\\
    &=\sum_{i=1}^{N_p} \sum \limits_{n=0}^N \sum \limits_{x^i \in \mathcal{X}} \mu^{i,\pi^*}_n(x^i) \Big[ \langle \pi^i_{n,t}(.|x^i) - \pi^{i,*}_{n,t}(.|x^i),Q^{i,\pi^i_t, \mu^{\pi_t}}_n(x^i,.) \rangle \Big]\\
    &=\sum_{i=1}^{N_p} \sum \limits_{n=0}^N \sum \limits_{x^i \in \mathcal{X}} \mu^{i,\pi^*}_n(x^i) \Big[V^{i,\pi^i_t, \mu^{\pi_t}}_n(x^i) - \langle \pi^{i,*}_{n,t}(.|x^i),Q^{i,\pi^i_t, \mu^{\pi_t}}_n(x^i,.)\rangle \Big]\\
    &=\sum_{i=1}^{N_p} \sum \limits_{n=0}^N \sum \limits_{x^i \in \mathcal{X}} \mu^{i,\pi^*}_n(x^i) \Big[V^{i,\pi^i_t, \mu^{\pi_t}}_n(x^i) - \langle \pi^{i,*}_{n,t}(.|x^i),r^i(x^i, ., \mu^{\pi_t}) + \sum \limits_{{x'}^i \in \mathcal{X}} p({x'}^i|x^i, a^i)V^{i,\pi^i_t, \mu^{\pi_t}}_{n+1}({x'}^i,.)\rangle \Big]\\
    &=\sum_{i=1}^{N_p} \sum \limits_{n=0}^N \Big[\sum \limits_{x^i \in \mathcal{X}} \mu^{i,\pi^*}_n(x^i) V^{i,\pi^i_t, \mu^{\pi_t}}_n(x^i)\Big] - \Big[\sum\limits_{x^i \in \mathcal{X}} \mu^{i,\pi^*}_n(x^i) \langle \pi^{i,*}_{n,t}(.|x^i),r^i(x^i, ., \mu^{\pi_t}) \Big]\\
    &\qquad\qquad\qquad\qquad- \Big[\sum \limits_{{x'}^i \in \mathcal{X}} V^{i,\pi^i_t, \mu^{\pi_t}}_{n+1}({x'}^i) \underbrace{\sum \limits_{x^i,a^i \in \mathcal{X}\times\mathcal{A}} \mu^{i,\pi^*}_n(x^i) \pi^{i,*}_{n,t}(a^i|x^i) p({x'}^i|x^i, a^i)}_{=\mu^{i,\pi^*}_{n+1}({x'}^i)}\Big]\\
    &=\sum_{i=1}^{N_p} \underbrace{\sum \limits_{n=0}^N \Big[\sum \limits_{x^i \in \mathcal{X}} \mu^{i,\pi^*}_n(x^i) V^{i,\pi^i_t, \mu^{\pi_t}}_n(x^i)\Big] - \sum \limits_{n=0}^N \Big[\sum \limits_{{x'}^i \in \mathcal{X}} V^{i,\pi^i_t, \mu^{\pi_t}}_{n+1}({x'}^i) \mu^{i,\pi^*}_{n+1}({x'}^i)\Big]}_{=J^i(\pi^i_t, \mu^{\pi_t})}\\
    &\qquad\qquad\qquad\qquad - \sum_{i=1}^{N_p} \underbrace{\sum \limits_{n=0}^N \Big[\sum\limits_{x^i \in \mathcal{X}} \mu^{i,\pi^*}_n(x^i) \langle \pi^{i,*}_{n,t}(.|x^i),r^i(x^i, ., \mu^{\pi_t}) \Big]}_{=J^i(\pi^{i,*}, \mu^{\pi_t})}\\
    &=\sum_{i=1}^{N_p}\Big[J^i(\pi^i_t, \mu^{\pi_t})-J^i(\pi^{i,*}, \mu^{\pi_t})\Big] = \Delta J(\pi_t, \pi^*)  + \tilde {\mathcal{M}}(\pi_t, \pi^*) .
\end{align*}
\end{proof}

\newpage 

\section{Weak monotonicity implies $\tilde{ \mathcal{M}} \le 0$}
\label{app:wmon-tildeM}

\begin{proof}[Proof of Lemma~\ref{lem:wmon-tildeM}]
Consider two policies $\pi,\pi'$. Denote by $\mu = \mu^{\pi}, \mu' = \mu^{\pi'}$ respectively the induced distribution sequences. Let $\rho,\rho'$ be the associated joint distribution sequences:
$$
    \rho^i_n(x^i,a^i) = \mu^i_n(x^i) \pi^i_n(a^i|x^i)
$$
and likewise for $\rho'$. By the weak monotonicity, we have: 
\begin{equation}
\label{eq:proof-uniqueness-ineq}
    0 
    \ge \sum \limits_i \sum \limits_{(x^i, a^i) \in \mathcal{X}\times \mathcal{A}} (\rho^i_n(x^i, a^i)-{\rho'}^i_n(x^i, a^i))(r^i(x^i, a^i, \mu_n)-r^i(x^i, a^i, {\mu'}_n))
    =
    \Delta J(\pi,\pi') + \Delta J(\pi',\pi),
\end{equation}
with
$$
    \Delta J(\pi,\pi')
    = 
    \sum \limits_i \sum \limits_{(x^i, a^i) \in \mathcal{X}\times \mathcal{A}} (\rho^i_n(x^i, a^i)-{\rho'}^i_n(x^i, a^i))r^i(x^i, a^i, \mu_n),
$$
and
$$
    \Delta J(\pi',\pi)
    = 
    \sum \limits_i \sum \limits_{(x^i, a^i) \in \mathcal{X}\times \mathcal{A}} ({\rho'}^i_n(x^i, a^i) - \rho^i_n(x^i, a^i))r^i(x^i, a^i, {\mu'}_n).
$$
From here, we deduce~\eqref{Weak_Monotonicity}. Similarly, the strictly weak monotonicity implies a strict inequality in~\eqref{Weak_Monotonicity}.

\end{proof}

\section{Strictly weak monotonicity implies uniqueness}
\label{app:swm-uniqueness}

\begin{proof}[Proof of Lemma~\ref{lem:swm-uniqness}]

Consider a strictly weakly monotone game. For the sake of contradiction, assume that there exist two different Nash equilibria, say $\pi,\pi'$.

Proceeding as in the proof of Lemma~\ref{lem:wmon-tildeM}, we obtain~\eqref{eq:proof-uniqueness-ineq} with a strict inequality. 

Note that $\Delta J(\pi,\pi')$ corresponds to the difference between the reward of a typical player following $\pi$ when the population follows $\pi$ and the reward of a typical player following $\pi'$ when the population still follows $\pi$, and vice versa for $\Delta J(\pi',\pi)$. Moreover, $\pi,\pi'$ are Nash equilibria, so we deduce that these two terms are non-negative, which yields a contradiction with~\eqref{eq:proof-uniqueness-ineq}.

\end{proof}

\newpage

\section{Online Mirror descent convergence}
\label{proof_lyapunov}

\begin{proof}[Proof of Theorem~\ref{thm_convergence}]

Let $\Xi$ be defined as :
\begin{align*}
    \Xi(\pi^*,\pi) 
    &:=\sum_{i=1}^{N_p} \sum \limits_{n=0}^N \sum \limits_{x^i \in \mathcal{X}} \mu^{i,\pi^*}_n(x^i)[D_{h}(\pi^{i,*}_n(x^i,\cdot), \pi^i_n(x^i,\cdot))] .
\end{align*}
We pick $\pi\in\Delta \mathcal{A}$. If $\Delta J(\pi, \pi^*)  + \tilde {\mathcal{M}}(\pi, \pi^*) = 0$ then, $\tilde {\mathcal{M}}(\pi, \pi^*)=0$ and we can deduce that $\mu^{\pi}= \mu^{\pi^*}$. This implies that $\pi$ is a Nash as  $\pi$ and $\pi^*$ share the same distribution and thus the reward of a best response against $\pi$ or $\pi^*$ will be the same.

Let us suppose now that $\pi$ is a Nash and $\Delta J(\pi, \pi^*)  + \tilde {\mathcal{M}}(\pi, \pi^*) < 0$, then $\sum \limits_{i=1}^{N_p} J^i(\pi^i, \mu^{\pi}) - J^i(\pi^{i,*},\mu^{\pi})<0$ meaning that there exists an $i$ such that $J^i(\pi^i, \mu^{\pi}) - J^i(\pi^{i,*},\mu^{\pi})<0$. But as $\pi$ is a Nash, for all $\pi', i$, we have $J^i(\pi^i, \mu^{\pi}) - J^i({\pi'}^i,\mu^{\pi})\geq 0$ which is a contradiction.

Hence, if $\Delta J(\pi, \pi^*)  + \tilde {\mathcal{M}}(\pi, \pi^*) < 0$ then $\pi$ is not a Nash.

This proves that the Bregman divergence $\min\limits_{\pi^*\in \textrm{Nash}}\Xi(\pi^*,.)$ is a strict Lyapunov function of the CTOMD system. Hereby, $\pi_t$ converges to the set of Nash equilibria.
\end{proof}

Related to the Hypothesis in Theorem~\ref{thm_convergence}, we can show the following:
\begin{lemma}
If a \MPMFG satisfies $\tilde {\mathcal{M}}(\pi, \pi')<0$ if $\mu^{\pi}\neq \mu^{\pi'}$ and 0 otherwise, then there is at most one Nash equilibrium distribution. 
\end{lemma}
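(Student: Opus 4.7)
The plan is to argue by contradiction, exploiting the fact that the Nash equilibrium property forces each of the four $J^i$ terms appearing in $\tilde{\mathcal{M}}$ to satisfy a natural inequality, whereas the hypothesis on $\tilde{\mathcal{M}}$ provides an opposing strict inequality as soon as the two distribution sequences differ. So I would suppose, for contradiction, that $\pi$ and $\pi'$ are two Nash equilibria of the \MPMFG with $\mu^{\pi}\neq \mu^{\pi'}$.

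The key observation is that, by the definition of a Nash equilibrium, for each population $i$ the policy $\pi^i$ maximizes $J^i(\cdot,\mu^{\pi})$, and likewise $\pi'^i$ maximizes $J^i(\cdot,\mu^{\pi'})$. Consequently, for every $i$,
\begin{align*}
J^i(\pi^i,\mu^{\pi}) - J^i(\pi'^i,\mu^{\pi}) &\ge 0, \\
J^i(\pi'^i,\mu^{\pi'}) - J^i(\pi^i,\mu^{\pi'}) &\ge 0.
\end{align*}
Summing these two inequalities over $i\in\{1,\dots,N_p\}$ yields exactly $\tilde{\mathcal{M}}(\pi,\pi')\ge 0$, by the very definition \eqref{Weak_Monotonicity} of $\tilde{\mathcal{M}}$.

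On the other hand, since we assumed $\mu^{\pi}\neq \mu^{\pi'}$, the hypothesis of the lemma gives $\tilde{\mathcal{M}}(\pi,\pi')<0$, which contradicts the previous bound. Hence no two Nash equilibria can have distinct induced distribution sequences, i.e.\ the Nash equilibrium distribution is unique.

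\textbf{Anticipated obstacle.} There is essentially no technical obstacle here: once one notices that the four $J^i$-terms in $\tilde{\mathcal{M}}$ recombine into two ``Nash deviation gaps'', each non-negative at a Nash, the proof is immediate. The only point that deserves care is that the hypothesis is stated on $\tilde{\mathcal{M}}(\pi,\pi')$ (and is an inequality on policies whose induced distributions differ), so one must verify that the aggregation over $i$ is performed exactly in the order matching \eqref{Weak_Monotonicity}, without mixing up which policy is evaluated against which population distribution. Once this bookkeeping is done, the contradiction is direct and the lemma follows.
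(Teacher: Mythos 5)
Your proof is correct and follows essentially the same route as the paper's: assume two Nash equilibria inducing distinct distributions, regroup the four terms of $\tilde{\mathcal{M}}(\pi,\pi')$ into the two non-negative Nash deviation gaps $\sum_i[J^i(\pi^i,\mu^{\pi})-J^i(\pi'^i,\mu^{\pi})]$ and $\sum_i[J^i(\pi'^i,\mu^{\pi'})-J^i(\pi^i,\mu^{\pi'})]$, and contradict the hypothesis $\tilde{\mathcal{M}}(\pi,\pi')<0$. No gaps.
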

Note that uniqueness of the equilibrium distribution does not imply uniqueness of the equilibrium policy. This implication holds however under extra assumptions (e.g., some kind of strict convexity of the cost function).
\begin{proof}
Consider a \MPMFG satisfying the assumption. Consider two Nash equilibria, say $\pi,\pi'$. For the sake of contradiction, assume that they generate two different distributions $\mu^{\pi}, \mu^{\pi'}$. We have:
\begin{align*}
    0 &> 
    \tilde{\mathcal{M}}(\pi,\pi') 
    \\
    &= \sum \limits_{i=1}^{N_p}\big[J^i(\pi^i, \mu^{\pi}) +  J^i({\pi'}^i, \mu^{\pi'})
    - J^i(\pi^i, \mu^{\pi'}) - J^i({\pi'}^i, \mu^{\pi}) \big]
    \\
    &= \sum \limits_{i=1}^{N_p}\big[J^i(\pi^i, \mu^{\pi})  - J^i({\pi'}^i, \mu^{\pi}) \big]
    + \sum \limits_{i=1}^{N_p}\big[ J^i({\pi'}^i, \mu^{\pi'})
    - J^i(\pi^i, \mu^{\pi'}) \big]
\end{align*}
where both terms are non-negative because $\pi$ and $\pi'$ are Nash equilibria. Hence we must have $\mu^{\pi}=\mu^{\pi'}$.
\end{proof}

\newpage

\section{Numerical Experiments} \label{app:expe}

\subsection{Garnet}
\label{subappx:garnet}

\begin{figure*}[htbp]
  \centering
  \begin{subfigure}{1.0\textwidth}
    \centering
    \includegraphics[width=1\linewidth]{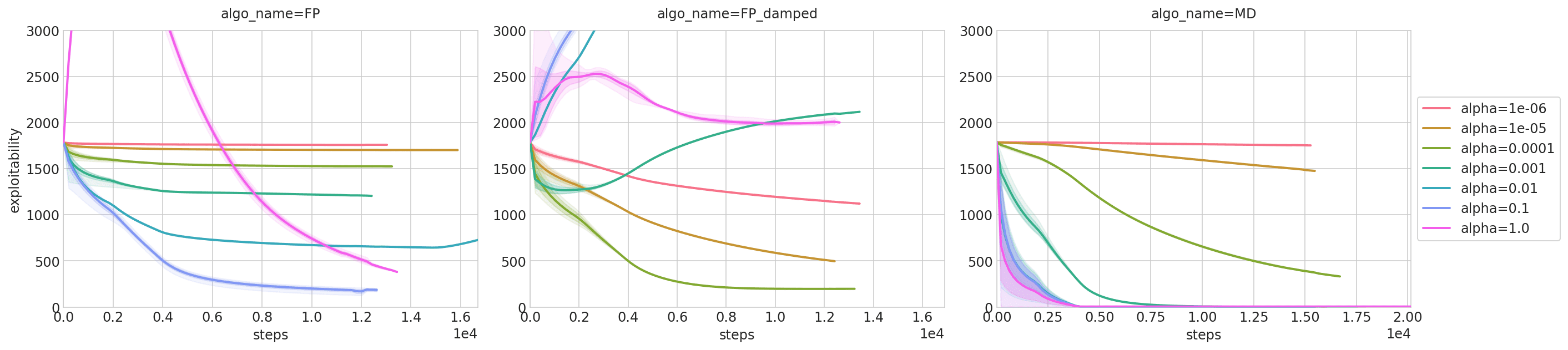}
    \vspace{-2em}
    \caption{5 garnet sampled with param $n_x=2000$, $n_a=20$, $t=2000$, $s_f=10$} 
    \label{garnet_b} 
  \end{subfigure}
  \begin{subfigure}{1.0\textwidth}
    \centering
    \includegraphics[width=1\linewidth]{images/images_compressed/garnet_ns20000_t2000_na10_sf10.png}
    \vspace{-2em}
    \caption{5 garnet sampled with param $n_x=20000$, $n_a=10$, $t=2000$, $s_f=10$} 
    \label{garnet_c}
  \end{subfigure}
  \begin{subfigure}{1.0\textwidth}
    \centering
    \includegraphics[width=1\linewidth]{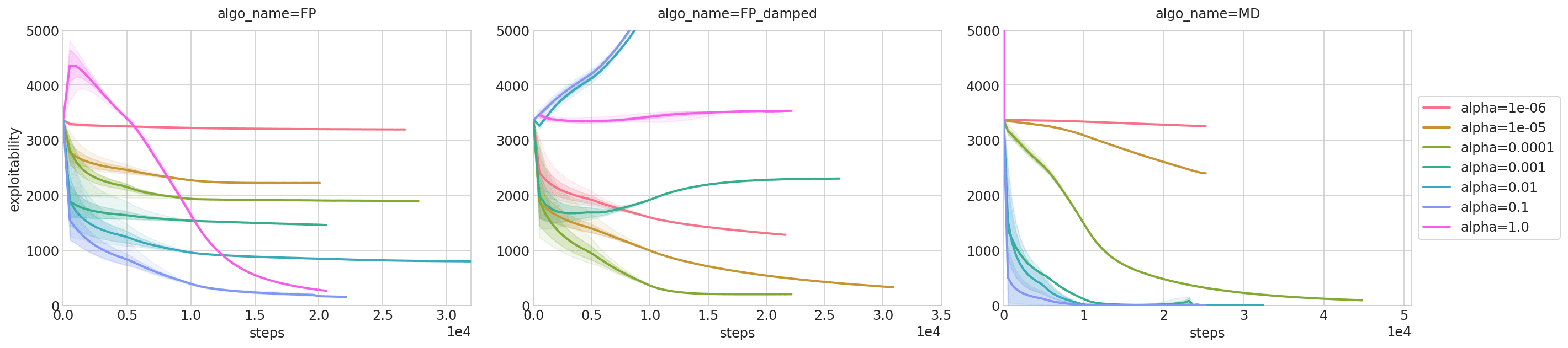}
    \vspace{-2em}
    \caption{5 garnet sampled with param $n_x=2000$, $n_a=10$, $t=2000$, $s_f=10$} 
    \vspace{-1em}
    \label{garnet_d}
  \end{subfigure}
  \caption{Garnet Experiments performances}
  \label{garnet_plot} 
\end{figure*}

\newpage 

\subsection{Building experiment}

\subsubsection{Building experiment performances}

\begin{figure*}[htbp]
    \centering
    \includegraphics[width=1\linewidth]{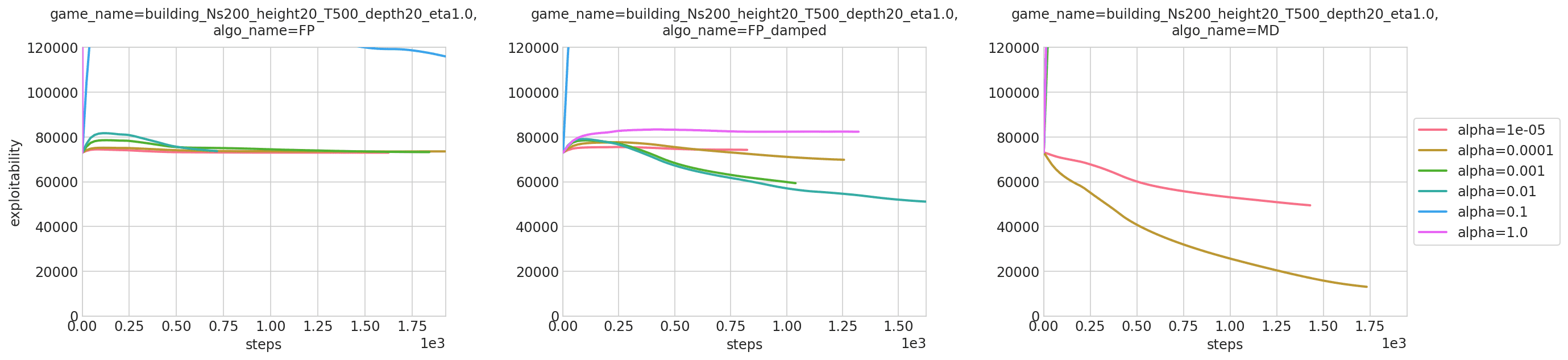}
    \vspace{-2em}
  \caption{Building Experiment performances}
  \label{building} 
\end{figure*}

\subsubsection{Building experiment solution}

The full building evacuation dynamics over the 20 floors is presented in Figure \ref{building_plot_20_storey} below.
\newpage 

\begin{figure*}[htbp]
  \centering
  \begin{subfigure}{0.49\textwidth}
    \centering
    \includegraphics[width=1.0\linewidth]{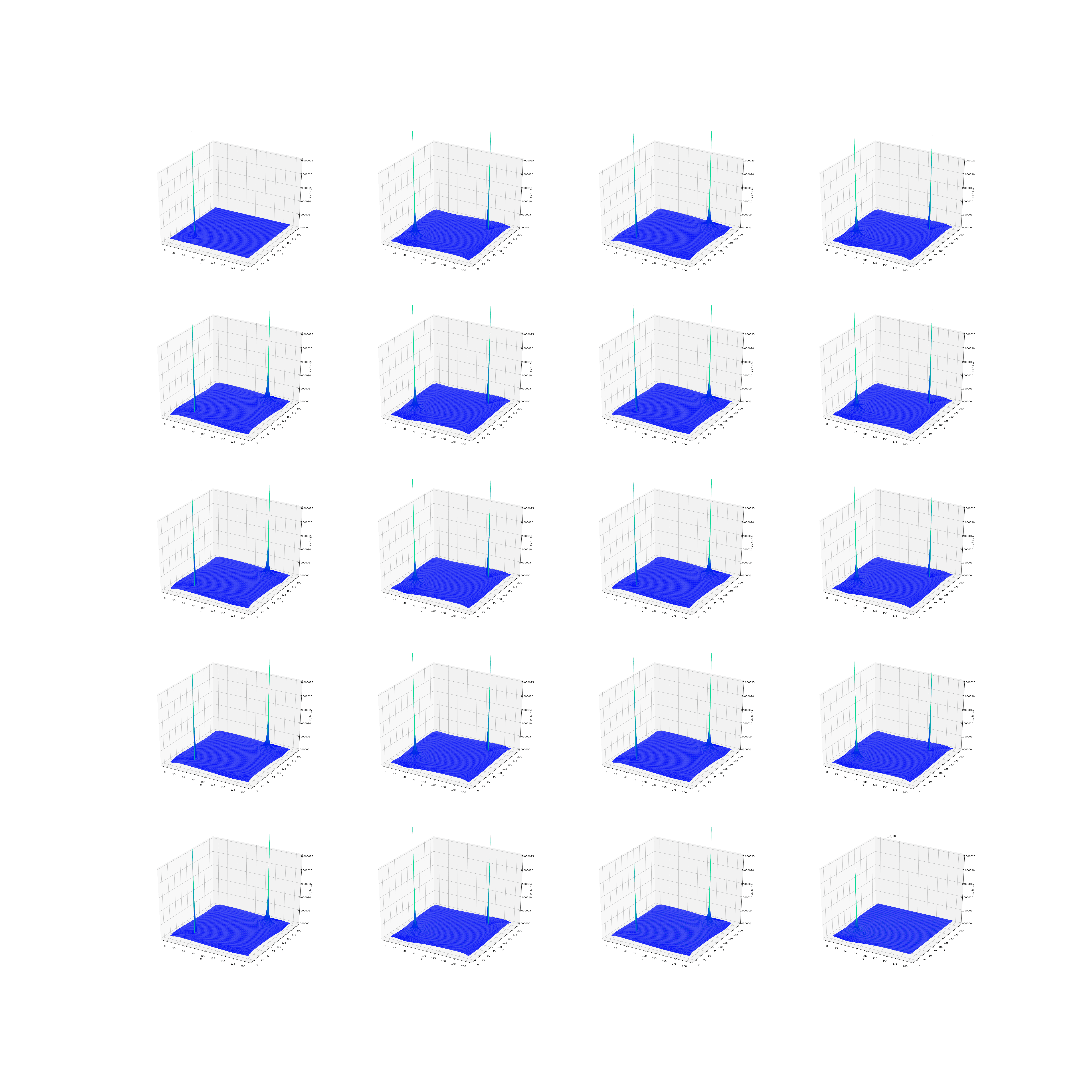}
        \vspace{-2em}
    \caption{After a few timesteps} 
    \vspace{1em}
    \label{build_2} 
  \end{subfigure}
  \begin{subfigure}{0.49\textwidth}
    \centering
    \includegraphics[width=1.0\linewidth]{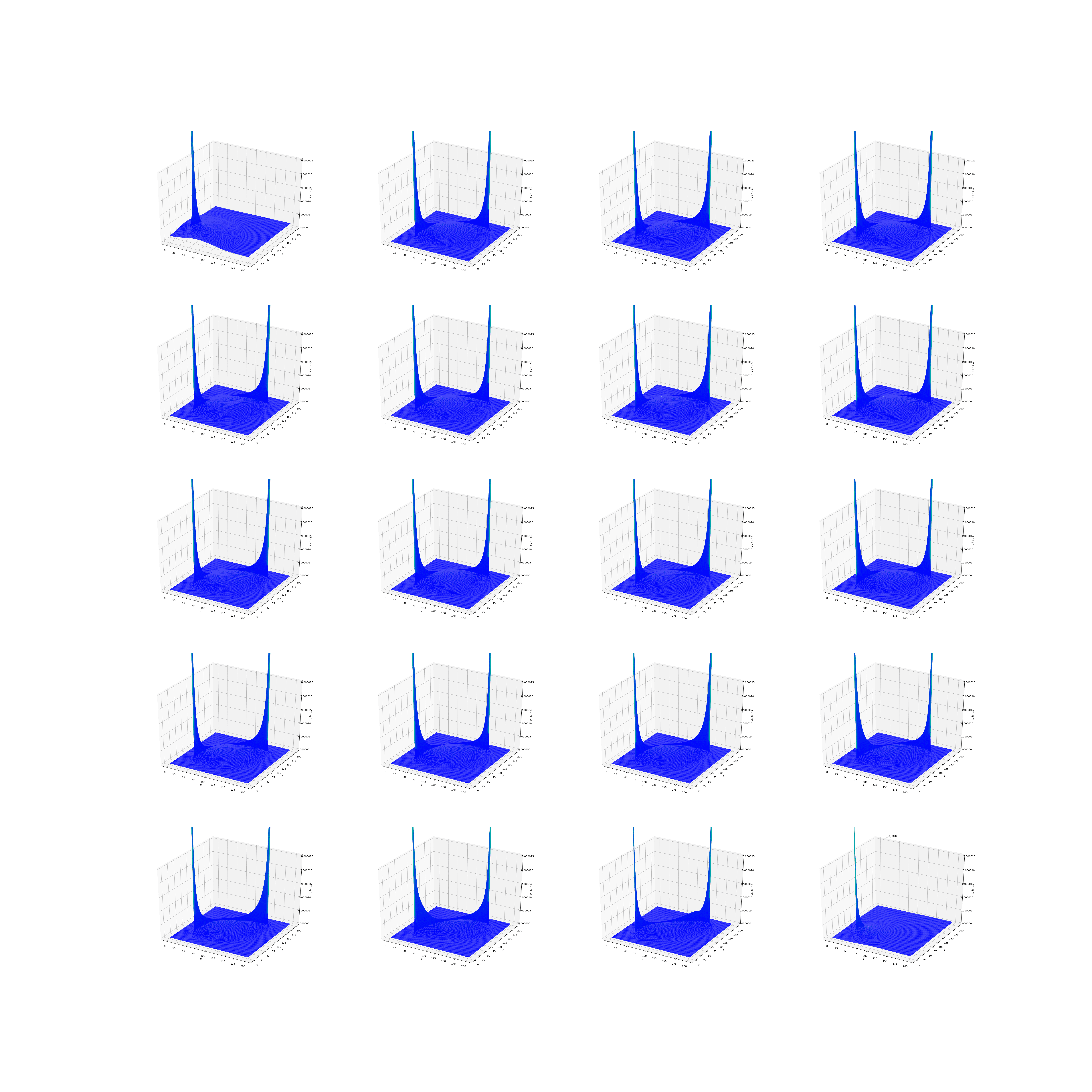}
    \vspace{-2em}
    \caption{Intermediate time} 
    \vspace{1em}
    \label{build_4}
  \end{subfigure}
  \begin{subfigure}{1.0\textwidth}
    \centering
    \includegraphics[width=0.49\linewidth]{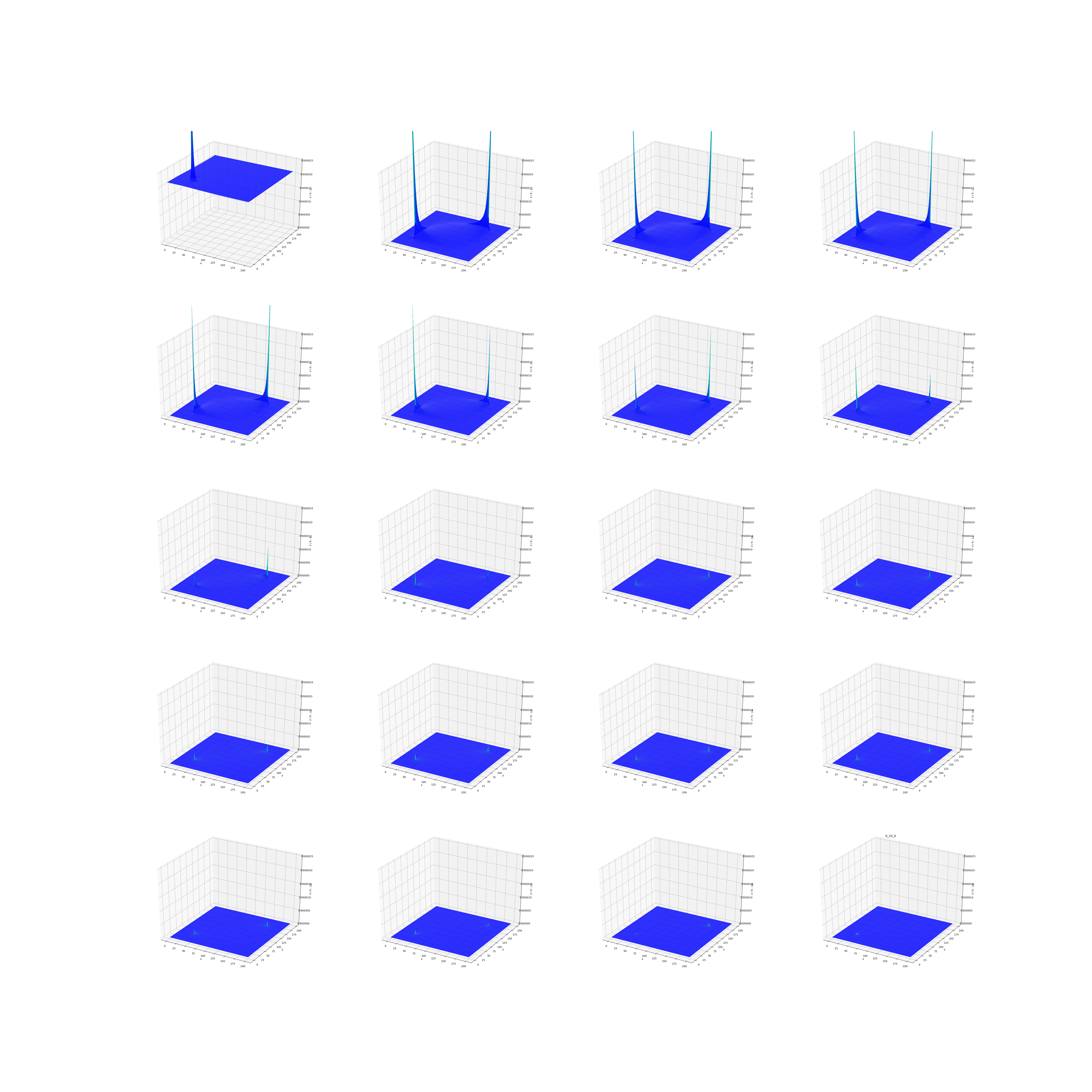}
    \vspace{-2em}
    \caption{Almost at arrival timestep} 
    \vspace{1em}
    \label{build_3}
  \end{subfigure}
  \caption{Building Experiment solution (ground floor on the upper left corner)}
  \label{building_plot_20_storey} 
\end{figure*}

\newpage

\subsection{Crowd motion with randomly shifted point of interest}

In this section, we discuss how to extend our results to the case of multi-population MFGs with common noise. 
In the example of Section~\ref{sec:crowd-random-shift}, the common noise corresponds to the geographical shifts of the point of interest.

The action space and the state space are the same but the dynamics and the reward are affected by a common noise sequence $\{\xi_n\}_{0\leq n\leq N}$. We denote $\Xi_n := \{\xi_k\}_{0\leq k < n} = \Xi_{n-1}.\xi_{n-1} $ the concatenation of the sequence $\Xi_{n-1}$ and the new noise $\xi_{n-1}$. By convention, we denote by $\Xi_{0}$ the empty sequence $\{\}$. $|\Xi_n| = n$ represents the total length of the sequence. The distribution of $\xi_{n}$ given the past sequence $\Xi_n$ is denoted by $P(.|\Xi_{n})$. 
Here, $\xi$ plays the role of a source or randomness which affects both the reward $r(x, a, \mu, \xi)$ and the probability transition function  $p(x'|x, a, \xi)$. It appears on top of the idiosyncratic randomness affecting each player. Policies and population distributions are now functions of the common noise and denoted respectively by $\pi^i_n(a|x,\Xi)$ and $\mu^i_{n}(x| \Xi)$ for population $i$. We will sometimes simply write $\pi^i_{n| \Xi}(a|x)$ and $\mu^i_{n| \Xi}(x)$.  Notice that the common noise is shared by all populations (we could also, with a slight modification, consider noises which are common to players of a given population and not shared with other populations). The $Q$ function of the $i$-th population now satisfies the following backward equation:
\begin{align*}
    &Q^{i,\pi^i, \mu}_N(x^i, a^i | \Xi_N) = r^i(x^i, a^i, \mu_{N | \Xi_N}, \xi_N)\\
    &Q^{i,\pi^i, \mu}_{n-1}(x^i, a^i | \Xi_{n-1}) = \sum_{\xi} P(\xi_{n-1} = \xi | \Xi_{n-1}) \Big[r^i(x^i, a^i, \mu_{n-1|\Xi_{n-1}}, \xi) \\
    &\qquad\qquad + \sum \limits_{{x'}^i\in \mathcal{X}} p({x'}^i|x^i, a^i, \xi) \mathbb{E}_{b^i\sim\pi^i_n(.|{x'}^i, \Xi_{n-1}.\xi)}\Big[Q^{i,\pi^i, \mu}_{n}(x^i, b^i | \Xi_{n-1}.\xi)\Big] \Big].
\end{align*}

For each population, the evolution of the distribution is conditioned on the realization of the common noise. It satisfies the forward equation: for all $x^i \in \mathcal{X}, \mu^{i,\pi^i}_{0|\Xi_0}(x)=\mu^{i}_0(x)$
and for all ${x'}^i \in \mathcal{X},$ 
\begin{equation*}
\mu^{i,\pi^i}_{n+1|\Xi_{n}.\xi_{n}}({x'}^i) = \sum \limits_{(x^i, a^i) \in \mathcal{X}\times \mathcal{A}} \pi^i_n(a^i|x^i,\Xi_n)p({x'}^i|x^i,a^i,\xi_{n})\mu^{i, \pi^i}_{n|\Xi_n}(x^i)
\end{equation*}
for $n \le N-1$. We denote $\mu^{\pi}=(\mu^{i,\pi^i})_{i \in \{1,\dots,N_p\}}$.

The expected total reward for a representative player of population $i$ using policy $\pi^i$ and facing the crowd behavior given by $\mu$ is:
\begin{align*}
    J^i(\pi^i, \mu) &= \mathbb{E}\Big[\sum \limits_{n=0}^N r^i(x^i_n, a^i_n, \mu_{n|\Xi_n},\xi_n) \;\Big|\; x^i_0 \sim \mu^i_0, a^i_n \sim \pi^i_n(.|x^i_n, \xi_n), x^i_{n+1}\sim p(.|x^i_n, a^i_n, \xi_{n}), \xi_{n} \sim P(.|\Xi_{n})\Big].
\end{align*}

\textbf{Continuous time Online Mirror Descent for \MPMFG{}s with common noise:}

In this setting, the Continuous Time Online Mirror Descent (CTOMD) algorithm is defined as: for all $i\in \{1,\dots,N_p\}, n\in \{0,\dots,N\}$, $y^i_{n,0} = 0$, and for all $t \in \mathbb{R}_+$,
\begin{align}
    y^i_{n,t}(x^i,a^i|\Xi_n) &= \int \limits_{0}^{t} Q^{i,\pi^i_s, \mu^{\pi_s}}_n(x^i,a^i|\Xi_n) ds,
    \label{eq:OMD-y-common}
    \\
    \pi^i_{n,t}(.|x^i,\Xi_n) &= \Gamma(y^i_{n,t}(x^i,.|\Xi_n)).
    \label{eq:OMD-pi-common}
\end{align}

Our theoretical results naturally extend to this setting by following similar arguments as the ones in~\cite{perrin2020fictitious}.

\newpage 
\subsection{Multi-population}\label{appx:sec_multipop_expes}

\subsubsection{Monotony of the multi-population reward}
\label{appx:subsec_proof_monotone_multi_reward}

We prove rigorously that the MP-MFG reward is monotone. As $\tilde r(x,a) = 0$, the separability condition is trivially verified. Furthermore, we have:

\begin{equation*}
\begin{aligned}
    \sum_i \sum_{x \in \mathcal{X}} (\mu^i(x) - \mu'^i(x))(\hat r^i(x,\mu) - \hat r^i(x,\mu')) = & \sum_i \sum_{x \in \mathcal{X}} (\mu^i(x) - \mu'^i(x))(-\log(\mu^i(x)) + \\
    & \sum_{j\neq i}\mu^j(x)\bar r^{i,j}(x)+\log(\mu^i(x)) - \sum_{j\neq i}\mu^j(x)\bar r^{i,j}(x)) \\
    = & \underbrace{\sum_i \sum_{x \in \mathcal{X}} (\mu^i(x) - \mu'^i(x))(-\log(\mu^i(x)) + \log(\mu'^i(x)))}_{\textrm{(1)}} + \\
    & \underbrace{\sum_i \sum_{x \in \mathcal{X}} (\mu^i(x) - \mu'^i(x)) (\sum_{j\neq i}\mu^j(x)\bar r^{i,j}(x) - \sum_{j\neq i}\mu^j(x)\bar r^{i,j}(x))}_{\textrm{(2)}};
\end{aligned}
\end{equation*}
where we have:
\begin{itemize}
    \item $(1) \leq 0$ because $\forall x, \forall i$, $(\mu^i(x) - \mu'^i(x)) (-\log(\mu^i(x)) + \log(\mu'^i(x))) \leq 0$ as $\log$ is an increasing function;
    \item $(2) = 0$ because $\forall i, \forall j, i\neq j, \bar{r}^{i,j}(x) = - \bar{r}^{j, i}(x)$.
\end{itemize}

Thus,
\begin{equation*}
    \sum_i \sum_{x \in \mathcal{X}} (\mu^i(x) - \mu'^i(x))(\hat r^i(x,\mu) - \hat r^i(x,\mu')) \leq 0.
\end{equation*}

\subsubsection{Multi-population performances}\label{appx:subsec_multipop_medium}

The performances of Fictitious Play and Online Mirror Descent for the multi-population chasing Mean Field Game with different field topologies and initial distribution are presented in Figure \ref{multi_pop_medium}.

\begin{figure*}[htbp]
  \centering
  \begin{subfigure}{1.0\textwidth}
    \centering
    \includegraphics[width=0.85\linewidth]{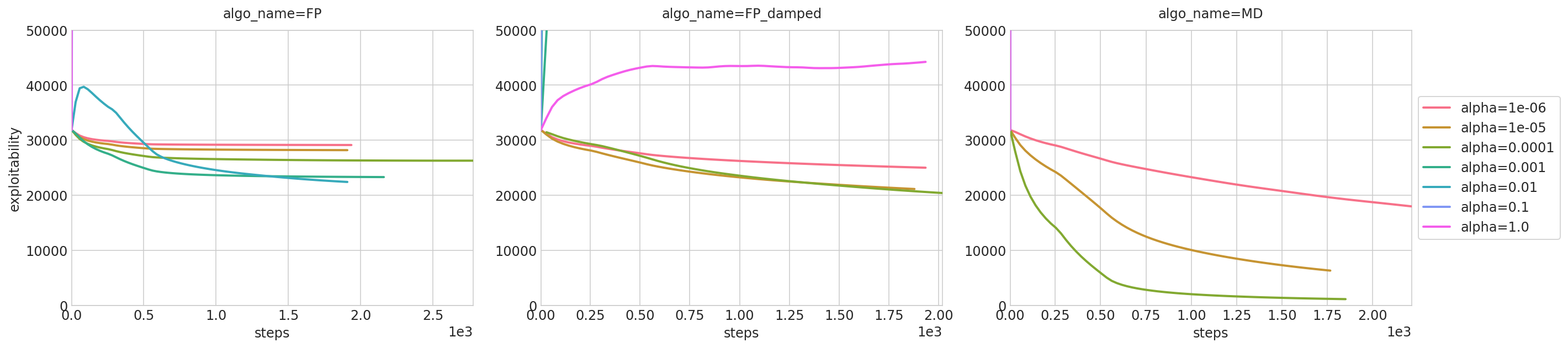}
    \caption{Torus topology and corner initialization} 
    \label{torus_corner} 
  \end{subfigure}
  \begin{subfigure}{1.0\textwidth}
    \centering
    \includegraphics[width=0.85\linewidth]{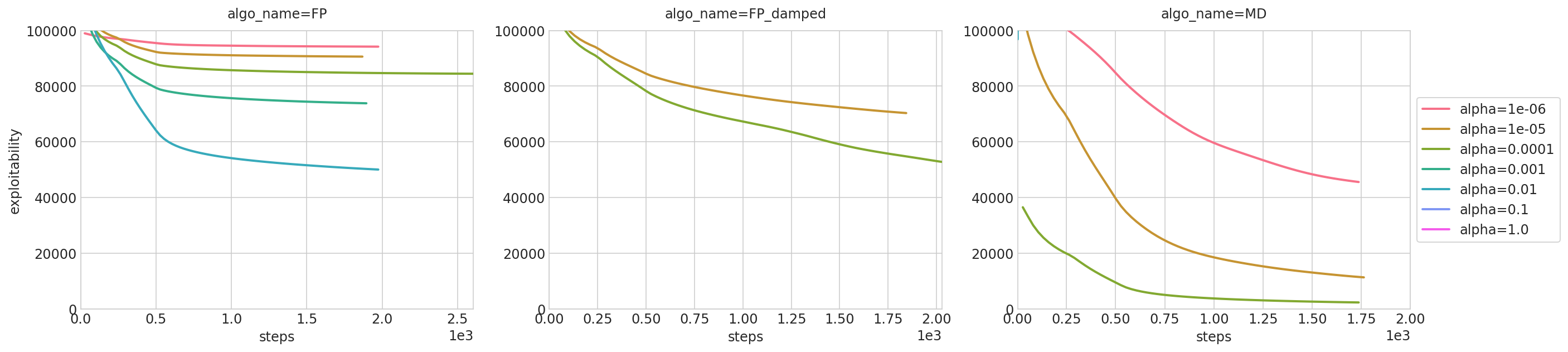}
    \caption{Square topology and corner initialization} 
    \label{square_corner} 
  \end{subfigure}
  \begin{subfigure}{1.0\textwidth}
    \centering
    \includegraphics[width=0.85\linewidth]{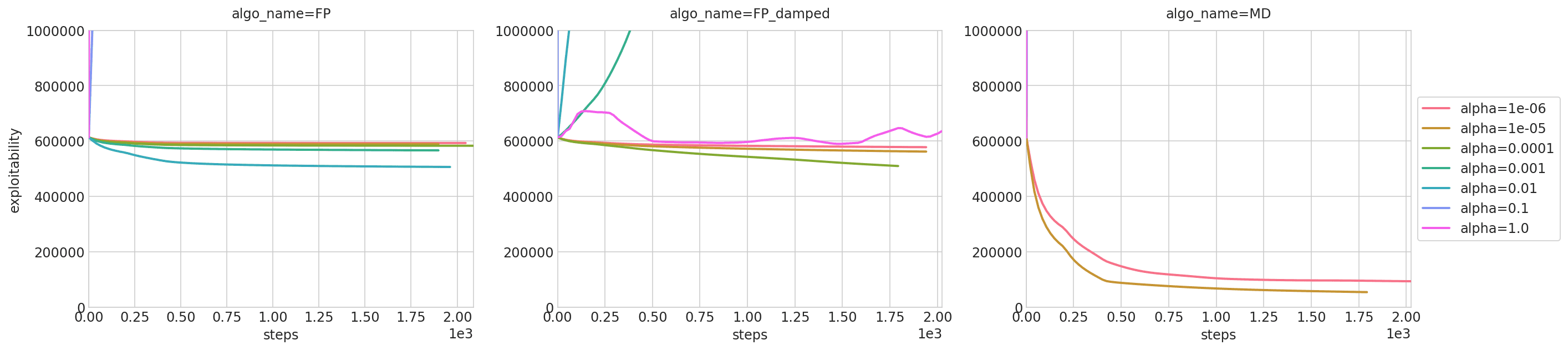}
    \caption{Donut topology and corner initialization} 
    \label{donut_corner} 
  \end{subfigure}
  \begin{subfigure}{1.0\textwidth}
    \centering
    \includegraphics[width=0.85\linewidth]{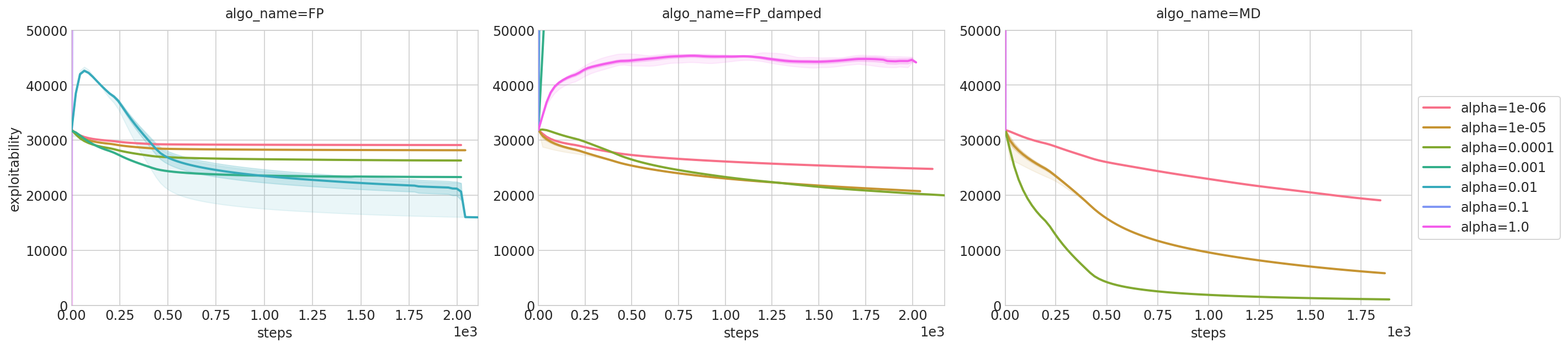}
    \caption{Torus topology and random initialization} 
    \label{torus_random} 
  \end{subfigure}
  \begin{subfigure}{1.0\textwidth}
    \centering
    \includegraphics[width=0.85\linewidth]{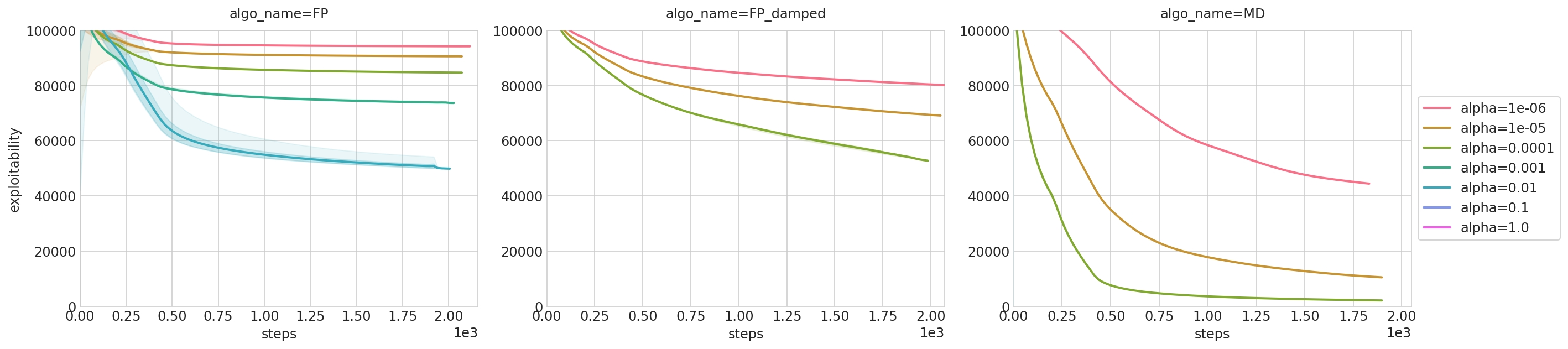}
    \caption{Square topology and random initialization} 
    \label{square_random} 
  \end{subfigure}
  \begin{subfigure}{1.0\textwidth}
    \centering
    \includegraphics[width=0.85\linewidth]{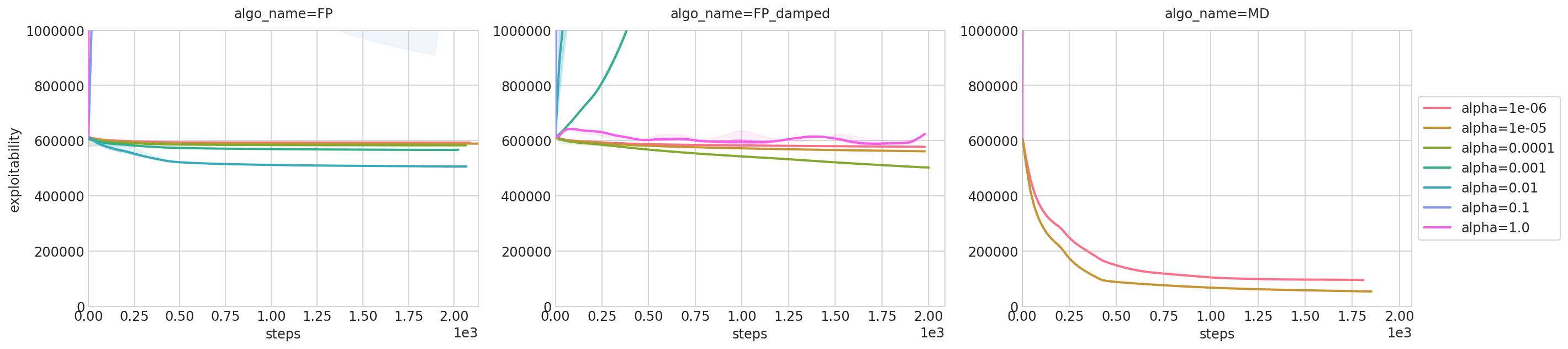}
    \caption{Donut topology and random initialization} 
    \label{donut_random} 
  \end{subfigure}
  \caption{Multi-population experiments, performances with different topologies}
  \label{multi_pop_medium} 
\end{figure*}
\end{document}